\newcommand{\err}[1]{\tiny{$\pm#1$}}
\newcommand{\best}[1]{\textbf{#1}}
\newtheorem{theorem}{Theorem}
\newtheorem{lemma}[theorem]{Lemma}
\newtheorem{definition}{Definition}
\newtheorem{corollary}{Corollary}
\def\eqref#1{equation~\ref{#1}}
\def\1{\bm{1}}
\def\vb{{\bm{b}}}
\def\vh{{\bm{h}}}
\def\vx{{\bm{x}}}
\def\vz{{\bm{z}}}
\def\mI{{\bm{I}}}
\def\mW{{\bm{W}}}
\DeclareMathAlphabet{\mathsfit}{\encodingdefault}{\sfdefault}{m}{sl}
\SetMathAlphabet{\mathsfit}{bold}{\encodingdefault}{\sfdefault}{bx}{n}
\def\gF{{\mathcal{F}}}
\def\gL{{\mathcal{L}}}
\def\gN{{\mathcal{N}}}
\def\gO{{\mathcal{O}}}
\def\gQ{{\mathcal{Q}}}
\def\gT{{\mathcal{T}}}
\def\sP{{\mathbb{P}}}
\def\sR{{\mathbb{R}}}
\newcommand{\E}{\mathbb{E}}
\newcommand{\KL}{D_{\mathrm{KL}}}
\newcommand{\Var}{\mathrm{Var}}
\newcommand{\Cov}{\mathrm{Cov}}
\def\E{\mathbb{E}} 
\def\Esubarg#1#2{\E_{#1}\left[{#2}\right]}
\newenvironment{sproof}{%
  \proof}{\endproof}
\icmltitlerunning{Hierarchical Importance Weighted Autoencoders}
\begin{document}

\twocolumn[
\icmltitle{Hierarchical Importance Weighted Autoencoders}




\begin{icmlauthorlist}
\icmlauthor{Chin-Wei Huang}{udem,elem}
\icmlauthor{Kris Sankaran}{udem}
\icmlauthor{Eeshan Dhekane}{udem}
\icmlauthor{Alexandre Lacoste}{elem}
\icmlauthor{Aaron Courville}{udem,cifar}
\end{icmlauthorlist}

\icmlaffiliation{udem}{Mila, University of Montreal}
\icmlaffiliation{elem}{Element AI}
\icmlaffiliation{cifar}{CIFAR member}

\icmlcorrespondingauthor{Chin-Wei Huang}{chin-wei.huang@umontreal.ca}

\icmlkeywords{Machine Learning, ICML}

\vskip 0.3in
]



\printAffiliationsAndNotice{}  

\begin{abstract}
Importance weighted variational inference \citep{burda2015importance} uses multiple i.i.d. samples to have a tighter variational lower bound. 
We believe a joint proposal has the potential of reducing the number of redundant samples,
and introduce a hierarchical structure to induce correlation.
The hope is that the proposals would coordinate to make up for the error made by one another to reduce the variance 
of the importance estimator.
Theoretically, we analyze the condition under which convergence of the estimator variance can be connected to convergence of the lower bound.
Empirically, we confirm that maximization of the lower bound does implicitly minimize variance.
Further analysis shows that this is a result of negative correlation induced by the proposed hierarchical meta sampling scheme, and performance of inference also improves when the number of samples increases.
\end{abstract}

\section{Introduction}

Recent advance in variational inference~\citep{kingma2013auto,rezende2014stochastic} makes it efficient to model complex distribution using latent variable model with an intractable marginal likelihood $p(\vx)=\int_\vz p(\vx|\vz)p(\vz)d\vz$, where $\vz$ is an unobserved vector distributed by a prior distribution, e.g. $p(\vz)=\gN(\mathbf{0},\mI)$. 
The use of an inference network, or encoder, allows for amortization of inference by directly conditioning on the data $q(\vz|\vx)$, to approximate the true posterior $p(\vz|\vx)$. 
This is known as the \textbf{Variational Autoencoder} (VAE).

Normally, learning is achieved by maximizing a lower bound on the marginal likelihood, since the latter is not tractable in general. 
Naturally, one would be interested in reducing the gap between the bound and the desired objective.
\citet{burda2015importance} devises a new family of lower bounds with progressively smaller gap using multiple i.i.d. samples from the variational distribution $q(\vz|\vx)$, 
which they call the \textbf{Importance Weighted Autoencoder} (IWAE). 
\citet{cremer2017reinterpreting,bachman2015training} notice that IWAE can be interpretted as using a corrected variational distribution in the normal variational lower bound. 
The proposal is corrected towards the true posterior by the importance weighting, and approaches the latter with an increasing number of samples. 

Intuitively, when only one sample is drawn to estimate the variational lower bound, the loss function highly penalizes the drawn sample, and thus the encoder. 
The decoder will be adjusted accordingly to maximize the likelihood in a biased manner, as it treats the sample as the real, observed data. 
In the IWAE setup, the inference model is allowed to make mistakes, as a sample corresponding to a high loss is penalized less owing to the importance weight. 

Drawing multiple samples also allows us to represent the distribution at a higher resolution. 
This motivates us to construct a joint sampler such that the empirical distribution drawn from the joint sampler can better represent the posterior distribution. 
More specifically, we consider a hierarchical sampler, whose latent variable $\vz_0$ acts at a meta-level to decide the salient points of the true posterior, which we call the \textbf{Hierarchical Importance Weighted Autoencoder} (H-IWAE).  
Doing so allows for (1) summarizing the distribution using the latent variable~\citep{edwards2016towards}, and (2) counteracting bias induced by each proposal. 
To analyze the latter effect, we look at the variance of the Monte Carlo estimate of the lower bound, and show that maximizing the lower bound implicitly reduces the variance. 
Our main contributions are as follows:
\begin{itemize}
    \item We propose a hierarchical model to induce dependency among the samples for approximate inference, and derive a hierarchical importance weighted lower bound.
    \item We analyze the convergence of the variance of the Monte Carlo estimate of the lower bound, and draw a connection to the convergence of the bound itself. 
    \item Empirically, we explore different weighting heuristics, validate the hypothesis that the joint samples tend to be negatively correlated, and perform experiments on a suite of standard density estimation tasks. 
\end{itemize}

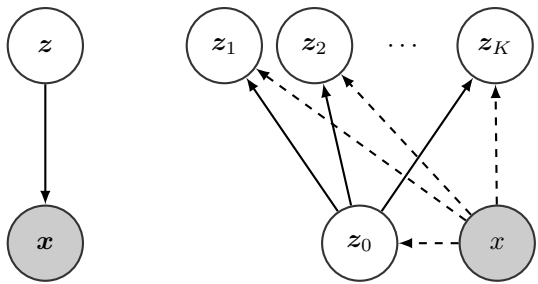
\begin{figure}[!t]
\centering
\subfigure{
\begin{tikzpicture}
\tikzstyle{main}=[circle, minimum size = 10mm, thick, draw =black!80, node distance = 16mm]
\tikzstyle{connect}=[-latex, thick]
  \node[main, fill = white!100] (z) [] {$\vz$};
  \node[main, fill = black!20] (x) [below=of z] {$\vx$};
  \path (z) edge [connect] (x);
\end{tikzpicture}
}
\hspace*{10mm}
\subfigure{
\begin{tikzpicture}
\tikzstyle{main}=[circle, minimum size = 10mm, thick, draw =black!80, node distance = 16mm]
\tikzstyle{connect}=[-latex, thick]
  \node[main, fill = white!100] (z0) [] {$\vz_0$};
  \node[main, fill = white!100] (z1) [above=of z0,xshift=-18mm] {$\vz_1$};
  \node[main, fill = white!100] (z2) [above=of z0,xshift=-6mm] {$\vz_2$};
  \node[main, fill = white!100, draw=black!0] (z3) [above=of z0,xshift=6mm] {$\cdots$};
  \node[main, fill = white!100] (zk) [above=of z0,xshift=18mm] {$\vz_K$};
  \node[main, fill = black!20] (x) [right=of z0,xshift=-8mm] {$x$};
  \path (z0) edge [connect] (z1)
  (z0) edge [connect] (z2)
  (z0) edge [connect] (zk)
  (x) edge [connect, dashed] (z0)
  (x) edge [connect, dashed] (z1)
  (x) edge [connect, dashed] (z2)
  (x) edge [connect, dashed] (zk)
  ;
\end{tikzpicture}
}
\caption{Latent variable model (left) with hierarchical proposals as the inference model (right). Dashed lines indicate amortization.}
\label{fig:hiwae}
\end{figure}
\begin{figure*}[!th]
\centering
\subfigure{
\includegraphics[width=0.9\textwidth]{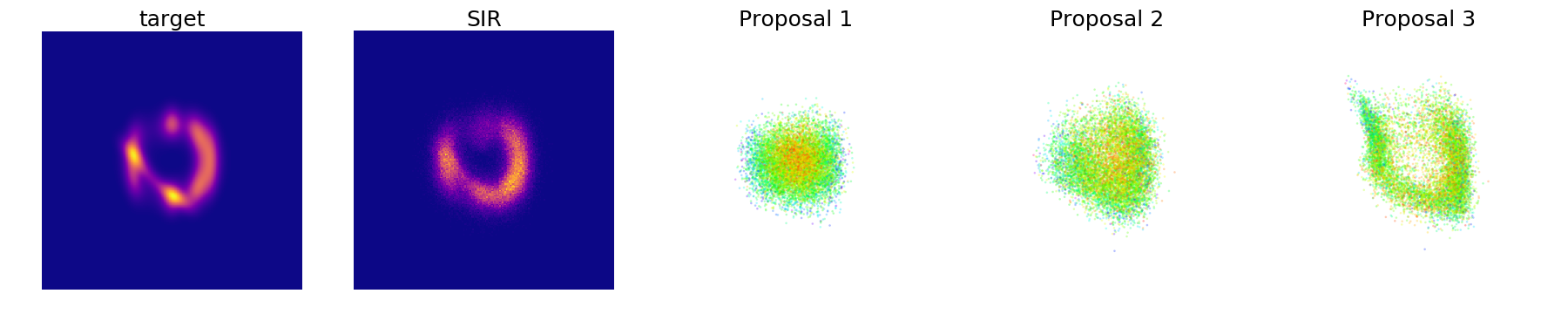}
\label{fig:hiwae_mk}
}\\
\hdashrule[0.5ex][c]{0.8\textwidth}{1.0pt}{1.5mm}
\subfigure{
\includegraphics[width=0.9\textwidth]{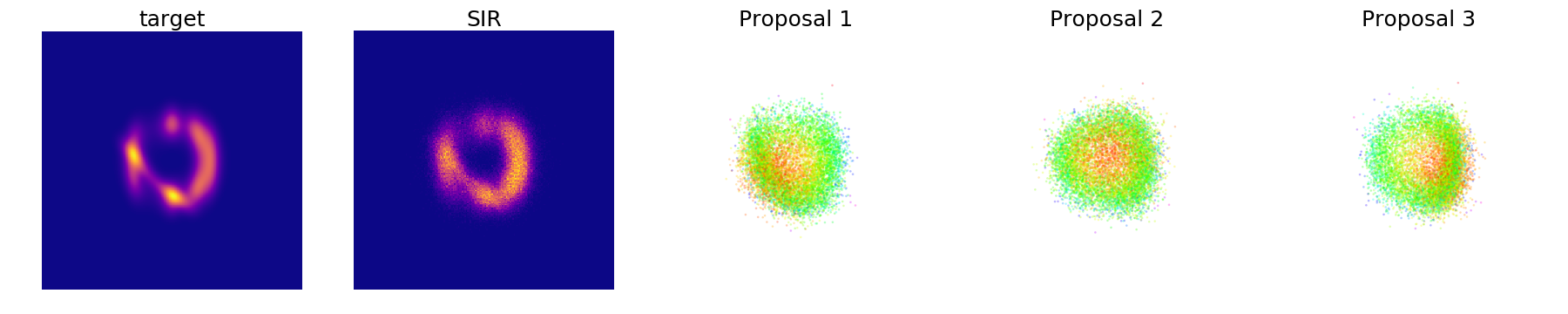}
\label{fig:hiwae_ar}
}\\
\subfigure{
\includegraphics[width=0.9\textwidth]{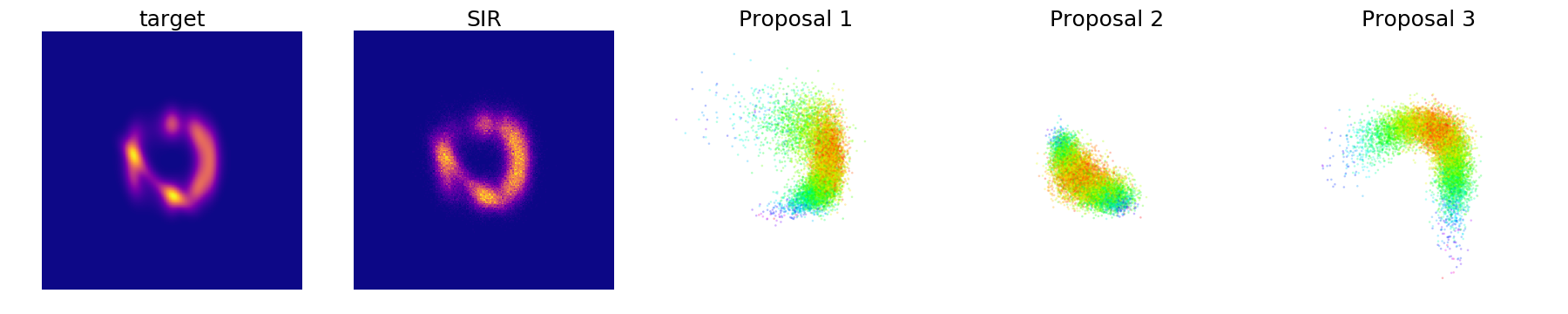}
\label{fig:hiwae_bh}
}

\caption{Learned hierarchical importance sampling proposals with uniform weighting (2nd row) and balanced heuristic (3rd row).
From left to right are the target distribution, heat map of samples drawn from a \textit{sampling importance resampling} procedure, and the proposals. 
The color scheme corresponds to the norm of $\vz_0$, indicating the correlation among the samples. 
A proposal with Markov transition is presented in the 1st row as a comparison: the dependency among the samples is visually much weaker.}
\label{fig:hiwae_toy}
\end{figure*}

\section{Background}
In a typical setup of latent variable model, we assume a joint density function that factorizes as  $p(\vx,\vz)=p(\vx|\vz)p(\vz)$, where $\vx,\vz$ are the observed and unobserved random variables, respectively. 
Learning is achieved by maximizing the marginal log-likelihood of the data $\log p(\vx)=\log\int_\vz p(\vx,\vz) d\vz$, which is in general intractable due to the integration, since a neural network is usually used to parameterize the likelihood function $p(\vx|\vz)$.

\subsection{Variational Inference}
\label{sec:vi}
One way to estimate the marginal likelihood is via the variational lower bound. 
Concretely, we introduce a variational distribution $q(\vz)$ \footnote{For notational convenience we omit the conditioning on $\vx$ for amortized inference.}, and maximize the bound on the RHS:
\begin{align*}
\log p(\vx) 
&=\log \E_{q(\vz)}\left[\frac{p(\vx,\vz)}{q(\vz)}\right] 
\\
&\geq \E_{q(\vz)}\left[\log \frac{p(\vx,\vz)}{q(\vz)}\right] := \gL(q)
\end{align*}
which is known as the evidence lower bound (ELBO). 
The tightness of the ELBO can be described by the reverse Kullback-Leibler (KL) divergence $
\KL(q(\vz)||p(\vz|\vx))$, which is equal to $0$ if and only if $q(\vz)=p(\vz|\vx)$. 

\subsection{Importance Weighted Auto-Encoder}
\label{sec:iwae}
\citet{burda2015importance} noticed the likelihood ratio in the ELBO, $p(\vx,\vz)/q(\vz)$, resembles the importance weight in importance sampling, and introduced a family of lower bounds called the Importance Weighted Lower Bound (IWLB):
\begin{align*}
\label{eq:2:2:1:iwaeExpre}
\log p(\vx) 
&=\log \E_{\vz_j\sim q(\vz_j)}\left[\frac{1}{K}\sum_{j=1}^K\frac{p(\vx,\vz_j)}{q(\vz_j)}\right] 
\\
&\geq \E_{\vz_j\sim q(\vz_j)}\left[\log \frac{1}{K}\sum_{j=1}^K\frac{p(\vx,\vz_j)}{q(\vz_j)}\right] := \gL_K(q)
\end{align*}
with $\gL_K(q)=\gL(q)$ when $K=1$.
In practice, the expectation over the product measure is estimated by sampling $\vz_j\sim q(\vz)$ for $j=1,...,K$, and setting 
$$\tilde{\gL}_K(q)=\log \frac{1}{K}\sum_{j=1}^K\frac{p(\vx,\vz_j)}{q(\vz_j)}$$
One property of this family of bounds is monotonicity; i.e. $\gL_M(q)\geq \gL_N(q)$ for $M>N$. 
Strong consistency of $\tilde{\gL}_K(q)$ can also be proved since $\vz_j$'s are independently and identically distributed (by law of $q(\vz)$), in which case $\tilde{\gL}_K(q)\overset{\scriptscriptstyle{K\rightarrow\infty}}{\longrightarrow} \log p(\vx)$ almost surely.

This asymptotic property motivates the use of large number of samples in practice, but there are two practical concerns:
First, even though evaluation of $\vz_j$ under the generative model $p(\vx,\vz)$ can be done in parallel (sublinear rate in general), memory scales in $\gO(K)$.
An online algorithm can be adopted to reduce memory to $\gO(1)$, but at the cost of $\gO(n)$ evaluation \citep{huangsequentialized}. 
Second, for any finite $K$, for any choice of proposal such that $q(\vz)$ is not proportional to $p(\vx,\vz)$ \footnote{which is a reasonable assumption given the finite approximating capacity of the chosen family of $q$ and the finiteness of the recognition model for amortization.}, $\gL_K(q)$ is strictly a lower bound on the marginal likelihood.
This motivates the research in improving the surrogate loss functions (such as $\gL_K(q)$) for the intractable $\log p(\vx)$.


\citet{nowozindebiasing}, for instance, interpreted $\tilde{\gL}_K(q)$ as a biased estimator of $\log p(\vx)$, and introduced a family of estimators with reduced bias. 
The bias of the new estimator can be shown to be reduced to $\gO(K^{-(m+1)})$, for $m<K$ (compared to $\gO(K^{-1})$ for $\tilde{\gL}_K(q)$). 
However, the variance of the estimator can be high and is no longer a lower bound. 

Alternatively, one can look at the dispersion \footnote{By dispersion, we mean how ``spread out" or ``stretched" the distribution of the random variable is. Common statistical dispersion indices include variance, mean absolute difference, entropy, etc.} of the likelihood ratio $w_j=\frac{p(\vx,\vz_j)}{q(\vz_j)}$, which itself is a random variable. 
The gap between $\log p(\vx)$ and $\gL_K(q)$ can be explained by Jensen's inequality and the fact that $\log$ is a strictly concave function: $\log \E[w] \geq \E[\log w]$ where $w$ is a positive random variable. 
The equality holds if and only if $w$ is almost surely a constant. 
This can be approximately true when $w$ is taken to be the likelihood ratio $\frac{p(\vx,\vz)}{q(\vz)}$ and when $q(\vz)\approx p(\vz|\vx)$. 
Instead, if we take $w=\frac{1}{K}\sum_{j=1}^Kw_j$, we see a direct reduction in variance if $w_j$'s are all uncorrelated and identically distributed: $\Var(w)=\frac{1}{K}\Var(w_1)$. 
Intuitively, the shape of the distribution of $w$ becomes sharper with larger $K$, resulting in a smaller gap when Jensen's inequality is applied. 
This idea was also noticed by \citep{domke2018importance} and explored by \citep{klys2018joint}. 

However, a clear connection between how well $\log p(x)$ is approximated and the minimization of the variance of $w$ and/or the variance of $\log w$ is lacking. 
We defer the discussion to Section~\ref{sec:varmin} wherein we also provide a theoretical analysis. 

\subsection{Variance Reduction via Negative Correlation}
\label{sec:varred}
Let $w = \frac{1}{K}\sum_{i = 1}^K\pi_iw_i$ where $w_i$'s are random variables and $\pi_i$'s sum to one.
The variance of $w$ can be written as: 
\begin{align*}
    \Var(w) = &\sum_{i = 1}^K\pi_i^2\Var(w_i)
    + 2\sum_{i < j}\pi_i\pi_j\Cov(w_i, w_j)
\end{align*}
This suggests that the variance of $w$ is smaller if $w_i$'s are negatively correlated with each other.  
The intuition of this is as follows: if $w_1$ deviates from its mean from below, the error it makes can be canceled out by $w_2$ if the latter is above its mean. 

For example, \textit{antithetic variate}~\citep{mcbook} is a classic technique that relies on negative correlation.
Assume we want to estimate $\E_q[w(\vz)]$, where $q(\vz)$ is a symmetric density. 
Given $\vz\sim q(\vz)$, we can augment our estimate with an $\vz'$ that is opposite to $\vz$ by reflecting through some center point, and set $\bar{w}=\frac{w(\vz)+w(\vz')}{2}$.
$\bar{w}$ is still an unbiased estimator of $\E_q[w(\vz)]$, but has variance $\Var(\bar{w})=\frac{\sigma^2_w}{2}(1+\rho)$, where $\sigma^2_w=\Var(w(\vz))$ and $\rho$ is the correlation between $w(\vz)$ and $w(\vz')$.  
If $w(\cdot)$ is monotonic, then the correlation is negative, so we achieved variance reduction at a rate faster than averaging two samples. 
Thus, \citet{wu2018differentiable} propose to train an antithetic sampler.
But in general, there's no guarantee that the random function $w(\cdot)$ is monotonic, so we propose to train a joint sampler via latent variable.


\section{Hierarchical Importance Weighted Auto-Encoder}
\label{sec:hiwae}
In order to achieve anticorrelation just described, we consider a joint sampling scheme, 
with a hierarchical structure that admits fast sampling and allows us to approximate marginal densities required to form a valid lower bound. 
\subsection{Joint Importance Sampling}
\label{subsec:jims}
We first consider a joint density of $K$ $\vz_j$'s, denoted by $Q(\vz_1, ..., \vz_K)$. 
Let $q_j(\vz_j)=\int_{\vz_{\neg j}} Q(\vz_1, ..., \vz_K) d \vz_{\neg j}$ be the marginal. 
Let $\pi_j(\vz)$ be a weighting factor such that $\sum_{j=1}^K\pi_j(\vz)=1$ for all $\vz$. 
Then Jensen's inequality gives
\begin{align*}
\log p(\vx) &= \log \int_{\small \vz_1,...,\vz_K} \sum_{j=1}^K\pi_j(\vz_j) \frac{p(\vx,\vz_j)}{q_j(\vz_j)} d Q({\small \vz_1,...,\vz_K}) \\
&\geq \E_{Q} \left[\log\sum_{j=1}^K\pi_j(\vz_j) \frac{p(\vx,\vz_j)}{q_j(\vz_j)}\right]
:= \gL_K(Q)
\end{align*}
which we call the Joint Importance Weighted Lower Bound (J-IWLB) (see Appendix~\ref{app:jiwlb} for a detailed derivation). 
This allows us to generalize $\gL_K(q)$ in two ways:
\begin{enumerate}[label=\large\protect\textcircled{\small\arabic*}]
    \item The flexibility to have different marginals
    \item The dependency among $\vz_j$'s
\end{enumerate}
Point {\large\protect\textcircled{\small 1}} with $K>1$ allows us to relax the necessary condition for optimality when $K=1$, i.e. $q\propto p$. 
For example, let $\pi_j(\vz)$ be the ``posterior probability" of the random index $j$, $\pi_j(\vz)=\frac{q_j(\vz)}{\sum_iq_i(\vz)}$. 
Then $\gL_K(Q)$ is equivalent to using a mixture proposal $\sum_{i=1}^K \frac{q_i}{K}$. 
In order for the proposals to be optimal, it is sufficient if $p$ can be decomposed as a mixture density (to wit, $q_j$'s do not all need to be equal to $p$). 

Point {\large\protect\textcircled{\small 2}} allows us to leverage the correlation among $\vz_j$'s to further reduce the variance. 
With $\vz_1$ making a positive deviation from the mean $\frac{p(\vx,\vz_j)}{q_1(\vz_j)}> p(\vx)$, one would hope $\vz_2$ has a higher chance of making a negative deviation to cancel the error.
This can be thought of as a soft version of antithetic variates.

One difficulty in optimizing $\gL_K(Q)$ lies in estimating the marginal density $q_j$. 
Particular choices of $Q$, such as multivariate normal distribution, allows for exact evaluation, since $q_j$ is still normally distributed; this was explored by \citet{klys2018joint}. 
Another option is to define $K-1$ set of invertible transformations, $\gT_{j}(\cdot)$ for $j=2,...,K$, and then apply the mapping $\vz_j\leftarrow \gT_{j}(\vz_{j-1})$, where $\vz_1\sim q_1(\vz)$. 
The density of $z_j$ under $q_j$ can then be evaluated via change of variable transformation. 
This is known as the \textit{normalizing flows} \citep{rezende2015variational}. 
Other more general forms of the joint $Q$, however, does not have tractable marginal densities, and thus require further approximation.

\subsection{Hierarchical Importance Sampling} 
In order to induce correlation among $\vz_j$'s, we consider a hierarchical proposal:
\begin{align*}
Q(\vz_1,...,\vz_K) &= \int_{\vz_0} Q(\vz_1,...,\vz_K|\vz_0) dq_0(\vz_0) \\
&= \int_{\vz_0} \prod_{j=1}^K q_j(\vz_j|\vz_0) dq_0(\vz_0)
\end{align*}
with the conditional independence assumption $z_i\perp z_j \mid z_0$ for $i\neq j$ and $1\leq i, j \leq K$ (see Figure~\ref{fig:hiwae}). 
First, notice that the marginals are not identical since each $\vz_j$ is sampled from a different conditional $q_{j}(\vz_j|\vz_0)$.
More specifically, each marginal is a latent variable model, which can be thought of as an infinite mixture~\citep{ranganath2016hierarchical}: $q(\vz_j)=\int_{\vz_0} q_j(\vz_j|\vz_0)d q_0(\vz_0)$.
Second, $\vz_1,...,\vz_K$ are entangled through sharing the same \textit{common random number} $\vz_0$. 
The smaller the conditional entropy $H(\vz_j|\vz_0)$ is, the more mutually dependent $\vz_j$'s are. 
We analyze the effect of this common random number in Section~\ref{sec:exp1}.

While there are different ways to model the joint proposal, we emphasize the following properties of the hierarchical proposals that make it an appealing choice:
\begin{enumerate}
    \item Sampling can be parallelized.
    \item Empirically, we found that optimization behaves better than a sequential model (we also tested a Markov joint proposal, i.e. $\vz_j$ depends only on $\vz_{j-1}$, and found the learned proposals do not compensate for each other; see top rule of Figure~\ref{fig:hiwae_toy}).
    \item $\vz_0$ can be interpreted as a summary~\citep{edwards2016towards} of the empirical distribution. 
\end{enumerate}
\paragraph{Optimization Objective}
In the spirit of the variational formalism, we need to define an objective to optimize $Q$.
However, the marginal densities are no longer tractable due to the integration over the prior measure $q_0$. 
We introduce an auxiliary random variable \citep{agakov2004auxiliary} $r(\vz_0|\vz_j)$ to approximate $q_j(\vz_0|\vz_j)$ \footnote{$q_j(\vz_0|\vz_j)$ is the posterior of $\vz_0$ given $\vz_j$; that is $q_j(\vz_0|\vz_j)\propto q_j(\vz_j|\vz_0)q_0(\vz_0)$.}, and maximize the following 
objective:
\begin{align*}
\gL_K(Q^0) := \E_{Q^0} \left[\log \sum_{j=1}^K\pi_j(\vz_j,\vz_0) \frac{p(\vx,\vz_j)r(\vz_0|\vz_j)}{q_j(\vz_j|\vz_0)q_0(\vz_0)} \right] 
\end{align*}
where $Q^0$ is the joint of $\vz_0,...,\vz_K$, and $\pi_j(\vz,\vz_0)$ is a partition of unity for all $\vz,\vz_0$.
First, if we choose $\pi_j=\frac{1}{K}$ and let $r$ depend on $j$, $\gL_K(Q^0)$ boils down to the J-IWLB $\gL_K(Q)$ if $r_j(\vz_0|\vz_j)=q_j(\vz_0|\vz_j)$.
Second, with $K=1$, it is simply variational inference with a hierarchical model ~\citep{ranganath2016hierarchical}.
Lastly, $\gL_K(Q^0)$ is a lower bound on $\log p(\vx)$ (we relegate the proof to Appendix~\ref{app:hiwlb}).
We call it the Hierarchical Importance Weighted Lower Bound (H-IWLB). 

This training criterion is chosen also because the inference model and generative model can have a unified objective. 
It is also possible to consider training the inference model with different objectives.
For instance, direct minimization of the variance of the importance ratio amounts to minimization of the $\chi^2$-divergence (see \citet{dieng2017variational,muller2018neural}) \footnote{We have also tried to minimize the variance of the estimator (average of importance ratio), with an estimate of the first moment. 
But we found even the reparameterized gradient suffers from noisy signal and usually converges to a suboptimal solution.}.
We discuss the connection between vanishing variance and convergence of the $\log$ estimator in Section~\ref{sec:varmin}.

\paragraph{Weighting Heuristics}
A family of weighting heuristics commonly used in practice are the \textit{power heuristics}:
$$\pi_j^\alpha(\vz,\vz_0) = \frac{q_j(\vz|\vz_0)^\alpha}{\sum_{i=1}^Kq_i(\vz|\vz_0)^\alpha}$$
With larger $\alpha$, the heuristic tends to emphasize the proposal under which $\vz$ has larger likelihood more. 
When $\alpha=0$ and $1$, $\pi_j$ boils down to uniform probability (arithmetic average of the likelihood ratio) and the posterior probability of the index $j$, respectively.  
We compare different choices of weighting heuristics qualitatively on a toy example in Section~\ref{sec:exp2} and quantitatively on a suite of standard density modeling tasks with latent variable model in Section~\ref{sec:exp3}.

\paragraph{Training Procedure}
In our experiments, all $q_j$ and $q_0$ are conditional Gaussian distributions (also conditioned on $\vx$ in the amortized inference setting).
This allows us to use the reparameterized gradient~\citep{kingma2013auto,rezende2014stochastic}. 
However, as noted by \citet{rainforth2018tighter}, the signal-to-noise ratio of the gradient estimator for IWAE's encoder decreases as $K$ increases, large $K$ would render the learning of the inference model inefficient. 
Thus, we consider the recently proposed doubly reparameterized gradient by \citet{tucker2018doubly}. 
Empirically, we find the algorithm converges more stably.

\begin{figure*}[h]
\centering
\includegraphics[width=0.95\textwidth]{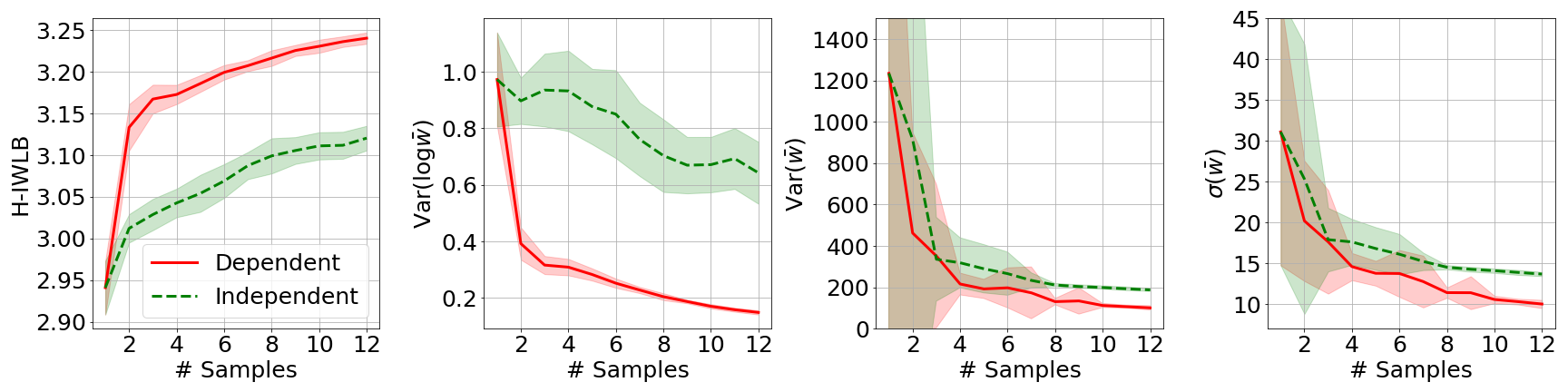}
\caption{\small H-IWLB trained with (dependent) and without (independent) common random number. The hierarchical proposal trained with the common factor learns to avoid redundant sampling and effectively reduces dispersion (lower variance and standard deviation).}
\label{fig:hiwae_dep_toy}
\end{figure*}
\begin{figure*}[t!]
\centering
\subfigure[Uncorrelated $w_j$]{
\centering
\includegraphics[width=0.30\textwidth]{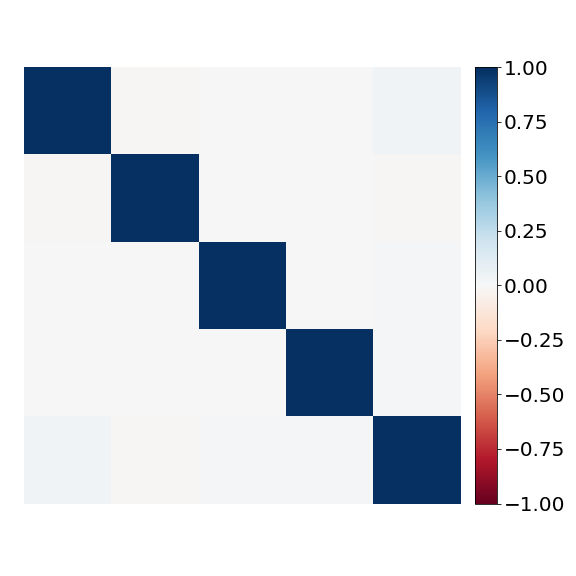}
\label{fig:corr_uncorr_uncorr}}
\hfill
\subfigure[Trained with independent $\vz_0$]{
\centering
\includegraphics[width=0.30\textwidth]{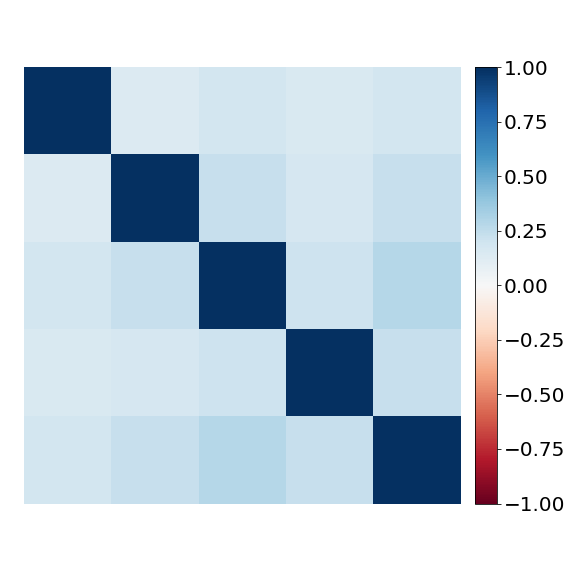}
\label{fig:corr_uncorr_corr}}
\hfill
\subfigure[Trained with common $\vz_0$]{
\centering
\includegraphics[width=0.30\textwidth]{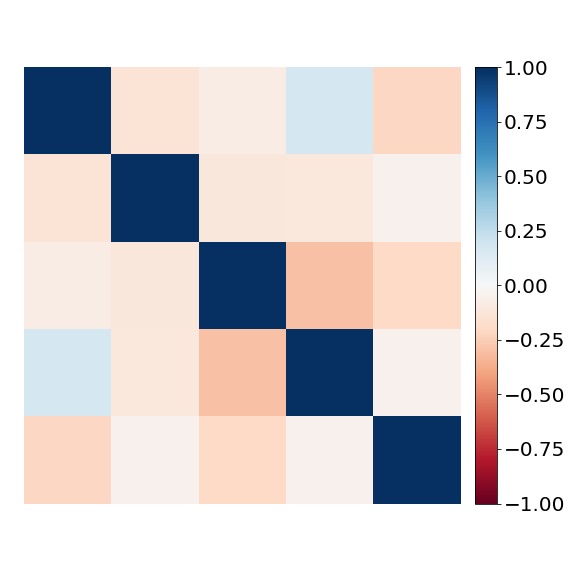}
\label{fig:corr_corr}}
\caption{\small Correlation matrix of $\bar{w}_j$ with independent $\vz_0$ (a) and common $\vz_0$ (b,c). 
When trained with common $\vz_0$ (c), the proposals coordinate to make up for the bias made by one another, resulting in negative correlation.}
\label{fig:hiwae_corr_toy}
\end{figure*}

\begin{table*}[h]
	\centering
	\scriptsize
	\caption{Variational Autoencoders trained on the statically binarized MNIST dataset~\citep{larochelle2011}.
    Each hyperparameter is run 5 times to carry out mean and standard deviation (uncertainties are reported in Appendix~\ref{app:adde} for readability).
    $\tilde{\gL}_{*}$ stands for the lower bound on log likelihood of the dataset ({$tr$}aining, {$va$}lidation and {$te$}st). $\text{NLL}_{*}$ is the negative log likelihood estimated with 2000 samples.}
    \label{tb:mnist}
\begin{tabular}{c|cc|cccc|cccc|cccc}
\toprule
mode               &   IWAE &        H-IWAE &   IWAE &   &        H-IWAE &  &   IWAE &   &        H-IWAE &        &          IWAE &   &        H-IWAE &  \\
$\alpha$           &      - &             0 &      - &       0 &             1 &       3 &      - &       0 &             1 &             3 &             - &       0 &             1 &       3 \\
$K$                &      1 &             1 &      2 &       2 &             2 &       2 &      5 &       5 &             5 &             5 &            10 &      10 &            10 &      10 \\
\hline
$-\tilde{\gL}_{tr}$ &  83.26 &  \best{82.92} &  82.36 &   82.19 &  \best{82.15} &   82.43 &  81.48 &   82.25 &  \best{81.28} &         81.32 &  \best{80.85} &   82.30 &         80.89 &   81.17 \\
$-\tilde{\gL}_{va}$ &  86.57 &  \best{85.75} &  85.40 &   85.05 &  \best{85.03} &   85.18 &  84.45 &   85.05 &  \best{84.04} &         84.12 &         83.84 &   85.11 &  \best{83.77} &   83.95 \\
$-\tilde{\gL}_{te}$ &  86.36 &  \best{85.50} &  85.16 &   84.79 &  \best{84.76} &   84.90 &  84.25 &   84.85 &  \best{83.79} &         83.90 &         83.62 &   84.86 &  \best{83.56} &   83.72 \\
$\text{NLL}_{va}$  &  82.64 &  \best{82.24} &  82.03 &   81.93 &  \best{81.88} &   81.96 &  81.63 &   82.19 &         81.42 &  \best{81.39} &         81.37 &   82.42 &  \best{81.28} &   81.33 \\
$\text{NLL}_{te}$  &  82.37 &  \best{81.96} &  81.77 &   81.65 &  \best{81.60} &   81.66 &  81.37 &   81.93 &         81.16 &  \best{81.13} &         81.13 &   82.17 &  \best{81.04} &   81.08 \\
\bottomrule
\end{tabular}
\end{table*}

\begin{table*}[h]
	\centering
	\scriptsize
	\caption{Variational Autoencoders trained on the dynamically binarized OMNIGLOT dataset~\citep{lake2015human}.
    Each hyperparameter is run 5 times to carry out the mean and standard deviation (uncertainties are reported in Appendix~\ref{app:adde} for readability).}
    \label{tb:omniglot}
\begin{tabular}{c|cc|cccc|cccc|cccc}
\toprule
mode               &    IWAE &         H-IWAE &           IWAE &         &         H-IWAE &         &    IWAE &                &         H-IWAE &         &           IWAE &         &  H-IWAE &         \\
$\alpha$           &       - &              0 &              - &       0 &              1 &       3 &       - &              0 &              1 &       3 &              - &       0 &       1 &       3 \\
$K$                &       1 &              1 &              2 &       2 &              2 &       2 &       5 &              5 &              5 &       5 &             10 &      10 &      10 &      10 \\
\hline 
$\tilde{\gL}_{tr}$ &  106.40 &  \best{104.57} &  \best{102.66} &  103.24 &         102.88 &  103.17 &  102.01 &  \best{101.35} &         101.37 &  102.78 &   \best{99.71} &   99.75 &  100.81 &  101.29 \\
$\tilde{\gL}_{va}$ &  109.05 &  \best{106.48} &         105.85 &  105.27 &  \best{105.18} &  105.56 &  104.48 &         103.76 &  \best{103.72} &  104.70 &  \best{102.67} &  103.21 &  103.49 &  103.66 \\
$\tilde{\gL}_{te}$ &  109.90 &  \best{107.36} &         106.70 &  106.12 &  \best{105.93} &  106.34 &  105.37 &         104.68 &  \best{104.62} &  105.56 &  \best{103.53} &  104.11 &  104.29 &  104.45 \\
$\text{NLL}_{va}$  &  102.98 &  \best{100.36} &         100.14 &   99.82 &   \best{99.68} &   99.87 &   99.43 &   \best{98.75} &          98.85 &   99.38 &   \best{97.97} &   98.48 &   98.58 &   98.78 \\
$\text{NLL}_{te}$  &  103.28 &  \best{100.79} &         100.48 &  100.16 &  \best{100.00} &  100.18 &   99.90 &          99.23 &   \best{99.21} &   99.80 &   \best{98.48} &   98.80 &   99.04 &   99.22 \\
\bottomrule
\end{tabular}
\end{table*}

\section{Connecting Variance and Lower Bound}
\label{sec:varmin}
Ideally, using a joint proposal allows for modelling the dependency among $\vz_j$'s and thus the negative correlation among the likelihood ratios, but we did not choose to optimize (minimize) variance directly.
Instead we maximize the lower bound. 
We are interested in how the two are connected, i.e. if convergence of one implies another. 
The following is the main takeaway of this section:

\textit{Convergence happening in one mode implies the convergence in the other mode ``if values of $w_j$ and $\log w_j$ cannot be extreme'' (we will assume these two are bounded in some sense). 
However, this is in general not true in the case of importance sampling, where the likelihood ratio is sensitive to the choice of proposal distribution. 
This suggests the density $q$ needs to be controlled in some way (e.g. smoothing the standard deviation of a Gaussian to avoid over-confidence).}

We work in a probability space $(\Omega, \gF, \sP)$. We write $w\in\gL^p$ for $p\geq1$ if $w$ is a random variable and $\E[|w|^p]<\infty$. 
A random sequence $\{w_n\}$ converges in $\gL^p$ to $w$ if $\E[|w_n-w|^p]\rightarrow0$ as $n\rightarrow\infty$.
Convergence in probability and in $\gL^p$ are denoted by $\overset{P}{\rightarrow}$ and $\overset{\gL^p}{\rightarrow}$, respectively. 

Let $\{w_n\}$ be a sequence of random variables, such that $\E[w_n]=p(\vx)$ and thus $\E[\log w_n]\leq \log p(\vx)$.
$w_n$ can be thought of as the likelihood ratio $p(\vx,\vz)/q(\vz)$ where $\vz\sim q(\vz)$, and its expectation wrt $\vz$ is exactly $p(\vx)$ \footnote{The index $n$ can be thought of as an indicator of a sequence of parameterizations of the joint $Q^0$. }.
We'd like to know if convergence of the lower bound $\E[\log w_n]$ to $\log p(\vx)$ (e.g. via maximization of H-IWLB) implies vanishing variance under any assumption, which justifies the use of a joint/hierarchical proposal to reduce variance at a potentially faster rate.
However, it is in general untrue that smaller $\Var(w_n)$, smaller $\Var(\log w_n)$ and larger $\E[\log w_n]$ can imply one another \footnote{\citet{klys2018joint} also attempt to derive a bound, but their result does not hold without making any assumption. \citet{maddison2017filtering} also require uniform integrability to establish consistency.}. 
Instead, we discuss their limiting behavior, and conclude that the condition $\Var(\log w_n)$ converges to zero sits somewhere between $\gL^1$-convergence and $\gL^2$-convergence of $\log w_n$. 
To do so, we require a bit more control over the sequences $w_n$ and $\log w_n$, such as boundedness. 
The first implication of the conclusion is that if the variance of $\log w_n$ vanishes and the sequences are bounded in some sense, $\E[\log w_n]$ also converges to $\log p(\vx)$ (see below). 
The second implication is that if $\log w_n$ converges to $\log p(\vx)$ ``more uniformly'' ($\gL^2$-convergence), then the variance of $\log w_n$ also converges to zero. 
We will discuss why boundedness control is required at the end of this section from the f-divergence perspective.

Now we turn to the analysis on the variance of $\log w_n$.
Doing so allows us to interpret $\log w_n$ as an estimator for $\log p(\vx)$, and we would like to answer the following question: if the variance of $\log w_n$ converges to zero, does $\E[\log w_n]$ converge to $\log p(\vx)$? 
The answer is positive given some boundedness condition, and the result suggests that if the variance of the estimator is infinitesimally small, so is the bias. 
We now state the result:
\begin{restatable}{prop}{varprob}
\label{prop:varprob}
Assume $w_n\in \gL^{1}$ with $w_n>0$, $\log w_n \in \gL^2$ and $\E[w_n]=c$ for some $c>0$, for all $n\geq1$. 
If $\{w_n\}$ is uniformly integrable and $\{\log w_n\}$ is bounded in $\gL^1$,
$$\lim_{n\rightarrow\infty}\textnormal{Var}(\log w_n)=0
\,\Rightarrow\,
\log w_n \overset{P}{\rightarrow} \log c$$
\end{restatable}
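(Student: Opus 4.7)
The plan is to split the implication into a probabilistic concentration step and a limiting-mean step, so that $\log w_n \overset{P}{\rightarrow} \log c$ is pieced together from (a) $\log w_n - \E[\log w_n] \overset{P}{\rightarrow} 0$ and (b) $\E[\log w_n] \rightarrow \log c$. Part (a) is immediate from Chebyshev's inequality, while part (b) is where uniform integrability does the real work, via a subsequence argument.

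For (a), Chebyshev gives, for any $\varepsilon > 0$,
\[
\sP\bigl(|\log w_n - \E[\log w_n]| > \varepsilon\bigr) \;\le\; \frac{\Var(\log w_n)}{\varepsilon^2} \;\longrightarrow\; 0,
\]
so writing $\mu_n := \E[\log w_n]$ we have $\log w_n - \mu_n \overset{P}{\rightarrow} 0$. Hence the claim follows once $\mu_n \rightarrow \log c$ as a real sequence.

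For (b), I would argue by subsequences. The $\gL^1$-boundedness of $\{\log w_n\}$ gives $|\mu_n| \le \sup_n \E[|\log w_n|] < \infty$, so $\{\mu_n\}$ is a bounded real sequence; it suffices to show every convergent subsequence has limit $\log c$. Fix a subsequence $\mu_{n_k} \rightarrow L \in \sR$. Combining with (a), $\log w_{n_k} \rightarrow L$ in probability, and by the continuous mapping theorem $w_{n_k} \rightarrow e^L$ in probability. Since $\{w_n\}$ is uniformly integrable, so is its subsequence $\{w_{n_k}\}$, and Vitali's convergence theorem upgrades convergence in probability to $\gL^1$-convergence, giving $\E[w_{n_k}] \rightarrow e^L$. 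But $\E[w_{n_k}] = c$ for every $k$, so $e^L = c$ and $L = \log c$. As every subsequential limit equals $\log c$, the bounded sequence $\mu_n$ itself converges to $\log c$, completing (b).

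The main obstacle is (b): vanishing variance yields only concentration of $\log w_n$ around its own mean and tells us nothing a priori about where that mean sits. Uniform integrability of $w_n$ (as opposed to of $\log w_n$) is precisely the hypothesis that lets one push a probabilistic statement about $\log w_n$ back to the anchor $\E[w_n] = c$ through the exponential, while $\gL^1$-boundedness of $\log w_n$ is the complementary condition that prevents $\mu_n$ from escaping to $\pm\infty$ so that the subsequence extraction is meaningful.
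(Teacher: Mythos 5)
Your proof is correct and rests on the same underlying mechanism as the paper's — Chebyshev's inequality to concentrate $\log w_n$ around its mean $\mu_n$, $\gL^1$-boundedness of $\log w_n$ to keep $\{\mu_n\}$ bounded, and uniform integrability of $w_n$ to pull the anchor $\E[w_n]=c$ back through the exponential — but the execution of the mean-identification step is genuinely different. The paper works with the full sequence at once: it proves a bespoke \emph{Extended Continuous Mapping Theorem} for differences $X_n-Y_n$ of sequences bounded in probability, applies it to obtain $w_n-\exp(\mu_n)\overset{P}{\rightarrow}0$, and then uses uniform integrability of that difference to conclude $\exp(\mu_n)\rightarrow c$. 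You instead extract a convergent subsequence $\mu_{n_k}\rightarrow L$ of the bounded sequence of means, after which only the textbook continuous mapping theorem (for convergence in probability to a constant) and Vitali's convergence theorem are needed to force $e^{L}=c$; since every subsequential limit equals $\log c$, the whole sequence of means converges there. Your route buys a shorter argument that leans entirely on standard results, at the mild cost of the indirection of a subsequence extraction; the paper's route avoids subsequences but must establish the ``difference of two tight sequences'' continuity statement as a separate lemma. Both are complete, and you correctly isolate the division of labor between the two boundedness hypotheses — uniform integrability of $w_n$ identifies where the means must converge, while $\gL^1$-boundedness of $\log w_n$ guarantees they converge somewhere — which is exactly the role each plays in the paper's argument.
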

\vspace*{-0.3cm}
The proposition tells us that if we look at $\log w_n$ as an estimator for some constant $\log c$ (e.g. $\log p(\vx)$), if $\E[w_n]=c$, then the vanishing variance of $\log w_n$ implies consistency.

With the same integrability condition on $\log w_n$, we can conclude $\E[\log w_n]$ converges to $\log p(\vx)$.
\begin{corollary}
\label{cor:l1}
With the same condition as Proposition~\ref{prop:varprob}, if we further assume $\{\log w_n\}$ is uniformly integrable,
$$\lim_{n\rightarrow\infty}\textnormal{Var}(\log w_n)=0
\,\Rightarrow\,
\log w_n \overset{\gL^1}{\rightarrow} \log c$$
In particular, $\E[\log w_n]\rightarrow \log p(\vx)$ as $n\rightarrow\infty$.
\end{corollary}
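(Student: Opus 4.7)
The plan is to chain Proposition~\ref{prop:varprob} with the classical Vitali convergence theorem, which states that convergence in probability together with uniform integrability is equivalent to $\gL^1$-convergence. Since the hypotheses of Proposition~\ref{prop:varprob} are already assumed, I get for free that $\log w_n \overset{P}{\rightarrow} \log c$. The corollary only adds uniform integrability of $\{\log w_n\}$ on top of this, so the upgrade from convergence in probability to $\gL^1$-convergence is immediate.

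Concretely, first I would invoke Proposition~\ref{prop:varprob} to conclude $\log w_n \overset{P}{\rightarrow} \log c$. Second, I would cite (or very briefly restate) Vitali's theorem: if $X_n \overset{P}{\rightarrow} X$ and $\{X_n\}$ is uniformly integrable, then $X_n \overset{\gL^1}{\rightarrow} X$ and in particular $\E[X_n]\to\E[X]$. Applying this with $X_n = \log w_n$ and the deterministic limit $X = \log c$ yields $\log w_n \overset{\gL^1}{\rightarrow} \log c$.

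For the final statement, the "In particular" clause follows because $\gL^1$-convergence implies convergence of expectations: $\E[\log w_n] \to \E[\log c] = \log c$. Since $c = \E[w_n] = p(\vx)$ by hypothesis, this is exactly $\E[\log w_n] \to \log p(\vx)$.

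The main (and only) obstacle is essentially bookkeeping: one must verify that the deterministic constant $\log c$ qualifies as a legitimate $\gL^1$-limit, which is trivial since constants are integrable, and that the uniform integrability hypothesis of the corollary is precisely what plugs the gap between the in-probability statement of Proposition~\ref{prop:varprob} and the desired $\gL^1$ statement. No new estimates are needed beyond Vitali.
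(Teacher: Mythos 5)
Your proposal is correct and matches the paper's own argument: the paper likewise obtains $\log w_n \overset{P}{\rightarrow} \log c$ from Proposition~\ref{prop:varprob} and then notes that uniform integrability of $\{\log w_n\}$ is exactly the (necessary and sufficient) extra condition to upgrade convergence in probability to $\gL^1$-convergence, concluding with the same bound $|\E[\log w_n]-\log p(\vx)|\leq \E[|\log w_n - \log p(\vx)|]\rightarrow 0$. No gaps.
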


Finally, the following result shows that $\gL^2$-convergence is sufficient for the convergence of variance of $\log w_n$ to zero. 
\begin{restatable}{prop}{qm}
\label{prop:qm}
With the same condition as Corollary~\ref{cor:l1}:
$$\log w_n \overset{\gL^2}{\rightarrow} \log c
\,\Rightarrow\,
\lim_{n\rightarrow\infty}\textnormal{Var}(\log w_n)=0
$$
\end{restatable}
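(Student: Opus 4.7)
The plan is to exploit the elementary extremal property of variance: among all constants $a\in\R$, the mean $\E[X]$ minimizes $a\mapsto \E[(X-a)^2]$, so for any $X\in\gL^2$,
$$\Var(X)\;=\;\E[(X-\E[X])^2]\;\le\;\E[(X-a)^2].$$
This turns a purely $\gL^2$-statement about distance from the constant $\log c$ into a bound on the variance, which is exactly the object we want to control.

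Then I would apply this inequality with $X=\log w_n$ and $a=\log c$, obtaining
$$0\;\le\;\Var(\log w_n)\;\le\;\E\bigl[(\log w_n-\log c)^2\bigr]\;=\;\|\log w_n-\log c\|_{\gL^2}^{2}.$$
By the hypothesis $\log w_n \overset{\gL^2}{\to}\log c$, the right-hand side tends to $0$ as $n\to\infty$, and squeezing gives $\Var(\log w_n)\to 0$. The integrability conditions inherited from Corollary~\ref{cor:l1} are really only needed to guarantee that the variance is finite and the mean well-defined; the implication itself relies only on the $\gL^2$-hypothesis.

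I expect no serious obstacle here: this is the ``easy direction'' expressing that $\gL^2$-convergence to a constant is strictly stronger than convergence of the variance to zero. As a consistency check, one could equivalently write $\Var(\log w_n)=\E[(\log w_n)^2]-(\E[\log w_n])^2$ and invoke continuity of the $\gL^2$-norm, which forces $\E[(\log w_n)^2]\to(\log c)^2$ and, via $\gL^2\Rightarrow\gL^1$-convergence, $\E[\log w_n]\to\log c$, so the difference tends to $(\log c)^2-(\log c)^2=0$. The extremal-property argument is cleaner because it avoids handling the first and second moments separately, and it highlights why no additional control on $w_n$ itself (uniform integrability, boundedness, etc.) is needed for this direction, in contrast to Proposition~\ref{prop:varprob}.
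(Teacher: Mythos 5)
Your proof is correct and takes essentially the same route as the paper: both reduce $\Var(\log w_n)$ to the second moment $\E[(\log w_n-\log c)^2]$, which vanishes by the $\gL^2$ hypothesis. Your use of the exact extremal identity $\Var(X)\le\E[(X-a)^2]$ is a slightly cleaner packaging than the paper's two-term bound (which separately sends the bias term $(\log c-\E[\log w_n])^2$ to zero via $\gL^2\Rightarrow\gL^1$ convergence), but the substance is identical.
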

Now, we turn back to the case of variational inference where $w=p/q$ (assume $p$ is normalized for the ease of exposition) and discuss the difficulty in analyzing the relationship between variance of $w$ and the lower bound $\E[\log w]$.
It is because variance is more sensitive to large likelihood ratio, whereas lower density region under $q$ does not take a heavy toll on the lower bound.
More concretely, since $\E_q[w]=1$, 
$\Var(w)
=\E_q[({p}/{q})^2]-1
=D_{\mathrm{\chi}^2}(p||q)$, 
%
where $D_{\mathrm{\chi}^2}$ is the $\mathrm{\chi}^2$-divergence.
Our insight is that the $\mathrm{\chi}^2$-divergence is an upper bound on the forward KL divergence $D_{\mathrm{KL}}(p||q)$ (see appendix for a proof).
This means when variance of $w$ decreases, the $\mathrm{\chi}^2$-divergence and the forward KL-divergence also decrease. 
However, decrease in the forward KL does not impliy decrease in the reverse KL, $D_{\mathrm{KL}}(q||p)$, which is what maximization of the lower bound $\E_q[\log w]$ is equivalent to. 
This can be explained by the characteristic function $f$ that is used in the f-divergence family:
$D_{f}(p||q):=\int f\left(\frac{p(\vz)}{q(\vz)}\right)q(\vz) d\vz$, where $f$ is a convex function such that $f(1)=0$. 
For the forward KL and reverse KL, the corresponding convex functions are $f_F(w)=w\log w$ and $f_R(w)=-\log w$, respectively. 
When $w\rightarrow0$, $f_F(w)\rightarrow0$ and $f_R(w)\rightarrow\infty$. 
When $w\rightarrow\infty$, $f_F(w)\rightarrow\infty$ and $f_R(w)\rightarrow-\infty$.
This means the forward KL will not reflect it when $p\ll q$, but will be high when $p\gg q$. 
The reverse KL on the other hand will explode when $p\gg q$ and can tolerate $p\ll q$.
These are known as the inclusive and exclusive properties of the two KLs \citep{minka2005divergence}. 
$\mathrm{\chi}^2$-divergence is defined by the convex function $f_\chi(w)=w^2-1$, which asymptotically bounds $f_F$: $\lim_{w\rightarrow\infty}{f_\chi(w)}/{f_F(w)}=\infty$. 
This means it will incur an even higher cost than the forward KL if $p\gg q$. 
See the Figure~\ref{fig:fdiv} in the appendix for an illustration. 
The boundedness assumption on $\log w_n$ and $w_n$ made in this section makes sure it is less likely that $w_n$ will be arbitrarily large or close to zero.



\begin{table*}[h]
	\centering
    \caption{Variational Autoencoders trained on the Caltech101 Silhouettes dataset~\citep{marlin2010inductive}.
    Each hyperparameter is run 3 times to carry out the statistics ($\mu$:$\sigma$).
    (*) indicates the encoder is updated twice on the same minibatch before the decoder is updated once. }
    \label{tb:caltech}
    \scriptsize
\begin{tabular}{rcccccccc}
\toprule
{} &   model & $\alpha$ & $K$ & $\tilde{\gL}_{tr}$ &    $\tilde{\gL}_{va}$ & $\tilde{\gL}_{te}$ &         $\text{NLL}_{va}$  &         $\text{NLL}_{te}$  \\
\midrule
 &   IWAE &    - &   1 &  123.74:1.56 &  128.87:1.47 &  129.71:1.47 &  117.88:1.50 &  118.61:1.44 \\
 &   IWAE &    - &   2 &  118.67:2.16 &  124.78:2.20 &  125.56:2.12 &  114.24:2.10 &  114.79:2.11 \\
 &   IWAE &    - &   5 &  112.50:1.16 &  120.12:0.43 &  120.90:0.40 &  109.87:0.68 &  110.49:0.64 \\
 &  IWAE (h) &    - &   1 &  113.54:3.06 &  121.20:1.13 &  121.81:1.23 &  109.28:1.38 &  109.71:1.35 \\
 &  IWAE (h) &    - &   2 &  111.94:2.46 &  118.59:1.24 &  119.51:1.13 &  107.52:1.36 &  108.18:1.38 \\
 &  IWAE (h) &    - &   5 &  108.20:0.58 &  115.58:1.08 &  116.40:1.11 &  105.16:0.77 &  105.70:0.78 \\
\midrule
  &  H-IWAE &    0 &   2 &  108.69:1.14 &  118.35:0.69 &  119.18:1.01 &  106.93:0.59 &  107.51:0.74 \\
  &  H-IWAE &    0 &   5 &  108.97:1.15 &  116.84:0.92 &  117.55:0.79 &  106.41:0.88 &  106.83:0.63 \\

 &  H-IWAE &    1 &   2 &  111.57:0.88 &  118.03:0.74 &  118.78:0.71 &  107.01:0.79 &  107.62:0.71 \\
  &  H-IWAE &    1 &   5 &  108.96:0.32 &  116.27:0.39 &  117.05:0.15 &  106.03:0.21 &  106.62:0.09 \\

  &  H-IWAE &    3 &   2 &  111.02:0.72 &  118.02:0.63 &  119.05:0.45 &  107.21:0.46 &  107.90:0.35 \\
  &  H-IWAE &    3 &   5 &  109.66:1.07 &  116.96:0.66 &  117.80:0.73 &  106.41:0.35 &  107.13:0.41 \\

(*) &  H-IWAE &    1 &   2 &  108.31:1.68 &  115.31:1.08 &  116.07:0.94 &  104.27:0.93 &  104.88:0.87 \\
(*) &  H-IWAE &    1 &   5 &  108.03:1.18 &  113.59:0.70 &  114.07:0.75 &  103.74:0.59 &  104.16:0.64 \\
\bottomrule
\end{tabular}
\end{table*}

\section{Related Work}
\label{sec:rw}
Much work has been done on improving the variational approximation, e.g. by using a more complex form of $q$ \citep{rezende2015variational,kingma2016improved,huang2018neural,ranganath2016hierarchical,maaloe2016auxiliary,miller2016variational} in place of the conventional normal distribution.
Along side the work of~\citet{burda2015importance}, \citet{mnih2016variational} and \citet{bornschein2014reweighted,le2018revisiting} also apply importance sampling to learning discrete latent variables and modifying the \textit{wake-sleep algorithm}, respectively.
\citet{cremer2017reinterpreting,bachman2015training} interpret IWAE as using a corrected proposal, whereas \citet{nowozindebiasing} interprets the IWLB as a biased estimator of the marginal log likelihood and propose to reduce the bias using the \textit{Jackknife method}. 
However, \citet{rainforth2018tighter} realize the \textit{signal-to-noise ratio} of the gradient of the inference model vanishes as $K$ increase, as the magnitude of the expected gradient decays faster than variance. 
\citet{tucker2018doubly} propose to use a doubly reparameterized gradient to mitigate this problem. 

Closest to out work is that of~\citet{klys2018joint}, where they propose to explore multivariate normal distribution as the joint proposal. 
\citet{domke2018importance} integrate defensive sampling into variational inference, and \citet{wu2018differentiable} propose to use a differentiable antithetic sampler. 
\citet{yin2018semi,molchanov2018doubly,sobolevimportance} propose to use multiple samples to better estimate the marginal distribution of a hierarchical proposal.


\section{Experiments}
\label{sec:exp}

We first demonstrate the effect of sharing a common random number on the dependency among the multiple proposals, and then apply the amortized version of hierarchical importance sampling (that is, H-IWAE) to learning a deep latent Gaussian models.
The details of how the inference model $Q^0(\cdot|\vx)$ is parameterized can be found in Appendix~\ref{app:addd}. 

\subsection{Effect of Common Random Number}
\label{sec:exp1}
In this section, we analyze the effect of training with common random number, $\vz_0$. 
As mentioned in Section~\ref{sec:hiwae}, the correlation among $\vz_j$'s is induced by tying $\vz_0$ as a common factor. 
If for each index $j$, we draw $\vz_0$ independently from $q_0$, $\vz_j$ will be rendered independent. 
Doing so allows us to test if inference models trained with a common random number tend to output correlated $\vz_j$'s. 
Let the target in Figure~\ref{fig:hiwae_toy} be the posterior distribution $\tilde{p}(\vz)$ (unnormalized), and let $\bar{w}_j=\tilde{p}(\vz_j)r_j(\vz_0|\vz_j)/q_j(\vz_j|\vz_0)q_0(\vz_0)$ and $\bar{w}=\sum \bar{w}_j/K$.
We consider maximizing H-IWLB with two settings: with and without a common $\vz_0$. 
We repeat the experiment 25 times with different random seeds, record the corresponding H-IWLB, variance of $\log \bar{w}$, variance and standard deviation of $\bar{w}$ with common $\vz_0$, as plotted in Figure~\ref{fig:hiwae_dep_toy}.

Qualitatively, we visualize the correlation matrix of $\bar{w}_j$ in Figure~\ref{fig:hiwae_corr_toy}. 
This shows that $\bar{w}_j$'s are indeed negatively correlated with each other when trained with common $\vz_0$.
When trained with independent $\vz_0$ for each $\vz_j$,  the hierarchical sampler tends to find similar solutions and are prone to incurring positively correlated biases (deviation from mean).


\subsection{Effect of Weighting Heuristics}
\label{sec:exp2}
We also analyze the effect of using different weighting heuristics.
We repeat the experiment in the last subsection and apply the power heuristic with $\alpha=0$ and $\alpha=1$ as the weighting scheme, both with common $\vz_0$.
We find that the weighting function $\pi_j$ has a major effect on the shape of the learned proposals $q_j$ (see Figure~\ref{fig:hiwae_toy}). 
With $\alpha=1$, the proposals behave more like a mixture, and the negative correlation is stronger as the proposals ``specialize'' in different regions. 
We also explore training the weighting function in Appendix~\ref{app:adde}.

\subsection{Variational Autoencoder}
\label{sec:exp3}
Our final experiment was to apply hierarchical proposals to learning variational autoencoders on a set of standard datasets, including binarized MNIST~\citep{larochelle2011}, binarized OMNIGLOT~\citep{lake2015human} and Caltech101 Silhouettes~\citep{marlin2010inductive}, using the same architecture as described in~\citet{huang2018neural}. 

Results on the binarized MNIST and OMNIGLOT are in Table~\ref{tb:mnist} and~\ref{tb:omniglot}, respectively. 
We compare with Gaussian IWAE as a baseline. 
The hyperparameters are fixed as follows: minibatch size 64, learning rate $5\times10^{-5}$, linear annealing schedule with 50,000 iterations for the log density terms except $p(\vx|\vz)$ (i.e. KL between $q(\vz|\vx)$ and $p(\vz)$ for VAE), polyak averaging with exponential averaging coefficient 0.998. 
For the MNIST dataset, with $\alpha=0$, arithmetic averaging does not provide monotonic improvement in terms of negative log-likelihood (NLL) when $K$ increases, whereas with $\alpha=1$ or $2$ the performance is better with larger $K$. 
For the OMNIGLOT dataset, the performance is consistently better with larger $K$. 

Table~\ref{tb:caltech} summarizes the experiment on the Caltech101 dataset. 
We compare with the Gausssian IWAE and IWAE with one single hierarchical proposal, dubbed IWAE (h), 
and perform grid search on a set of hyperparameters (Appendix~\ref{app:addd}), each repeated three times to calculate the mean and standard deviation. 
We report the performance of the models by selecting the hyperparameters that correspond to the lowest averaged NLL on the validation set. 
We see that with larger $K$, H-IWAE has an improved performance, but is outperformed by the IWAE with a single hierarchical proposal. 
We speculate this is due to the increased difficulty in optimizing the inference model with the extra parameters for each $q_j$, resulting in a larger amortization bias in inference~\citep{cremer2018inference}. 
To validate the hypothesis, we repeat the experiment of H-IWAE with the balanced heuristic $\alpha=1$, but update the encoder twice on the same minibatch before updating the decoder once. 
This gives us a significant improvement over the learned model (see the last two rows).

\section{Conclusion}
In order to approximate the posterior distribution in variational inference with a more representative empirical distribution, we propose to use a hierarchical meta-sampler.
We derive a variational lower bound as a training objective.
Theoretically, we provide sufficient condition on boundedness to connect the convergence of the variance of the Monte Carlo estimate of the lower bound with the convergence of the bound itself. 
Empirically, we show that maximizing the lower bound implicitly reduces the variance of the estimate. 
Our analysis shows that learning dependency among the joint samples can induce negative correlation and improve the performance of inference.


\nocite{langley00}

\bibliography{main}
\bibliographystyle{icml2019}


\clearpage
\onecolumn
\appendix

\section{Detailed J-IWLB Derivation}
\label{app:jiwlb}

We provide an alternative to the derivation in Section \ref{subsec:jims}. Note that if we can show that
\begin{align*}
  \Esubarg{\gQ}{\sum_{j = 1}^{K} \pi_j\left(\vz_j\right) \frac{p\left(\vx, \vz_j\right)}{q_{j}\left(\vz_j\right)}}
\end{align*}
is equal to $p\left(\vx\right)$, then an application of Jensen's inequality to the log of this quantity gives the desired bound.

First, since linearity holds for any collection of variables, whether or not they are independent, the summation can be taken outside, and since the resulting expectations involve only one 
$\vz_j$ at a time, the expectations can be taken
with respect to their marginals $q_j$. That is,
\begin{align*}
  \Esubarg{\gQ}{\sum_{j = 1}^{K} \pi_j\left(\vz_j\right)\frac{p\left(\vx, \vz_j\right)}{q_j\left(\vz_j\right)}} &= \sum_{j = 1}^{K} \Esubarg{q_j}{\pi_{j}\left(\vz_j\right) \frac{p\left(\vx, \vz_j\right)}{q_j\left(\vz_j\right)}}.
\end{align*}

Each of these $K$ expectations has a simple form,
\begin{align*}
  \Esubarg{q_j}{\pi_{j}\left(\vz_j\right) \frac{p\left(\vx, \vz_j\right)}{q_j\left(\vz_j\right)}} &= \int \pi_j\left(\vz\right)p\left(\vx, \vz\right) d\vz,
\end{align*}
where it's okay to drop the index $j$ previously appended to the $z$'s, because each integral refers to
all values each $z_j$ could possibly take on (not any individual sample of their
values, see Figure \ref{fig:integral_decomposition}). Since $\sum_{j} \pi_j\left(\vz\right) = 1$ pointwise for all
$z$, we can then swap summation and integration (assuming dominated-convergence-style regularity) and find

\begin{align*}
  \sum_{j = 1}^{K} \int \pi_j\left(\vz\right)p\left(\vx, \vz\right) d\vz &= \int \sum_{j=1}^{K} \pi_{j}\left(\vz\right) p\left(\vx, \vz\right) d\vz \\
  &= \int p\left(\vx, \vz\right) d\vz \\
  &= p\left(\vx\right).
\end{align*}

\begin{figure}[ht]
  \centering
  \includegraphics[width=0.4\paperwidth]{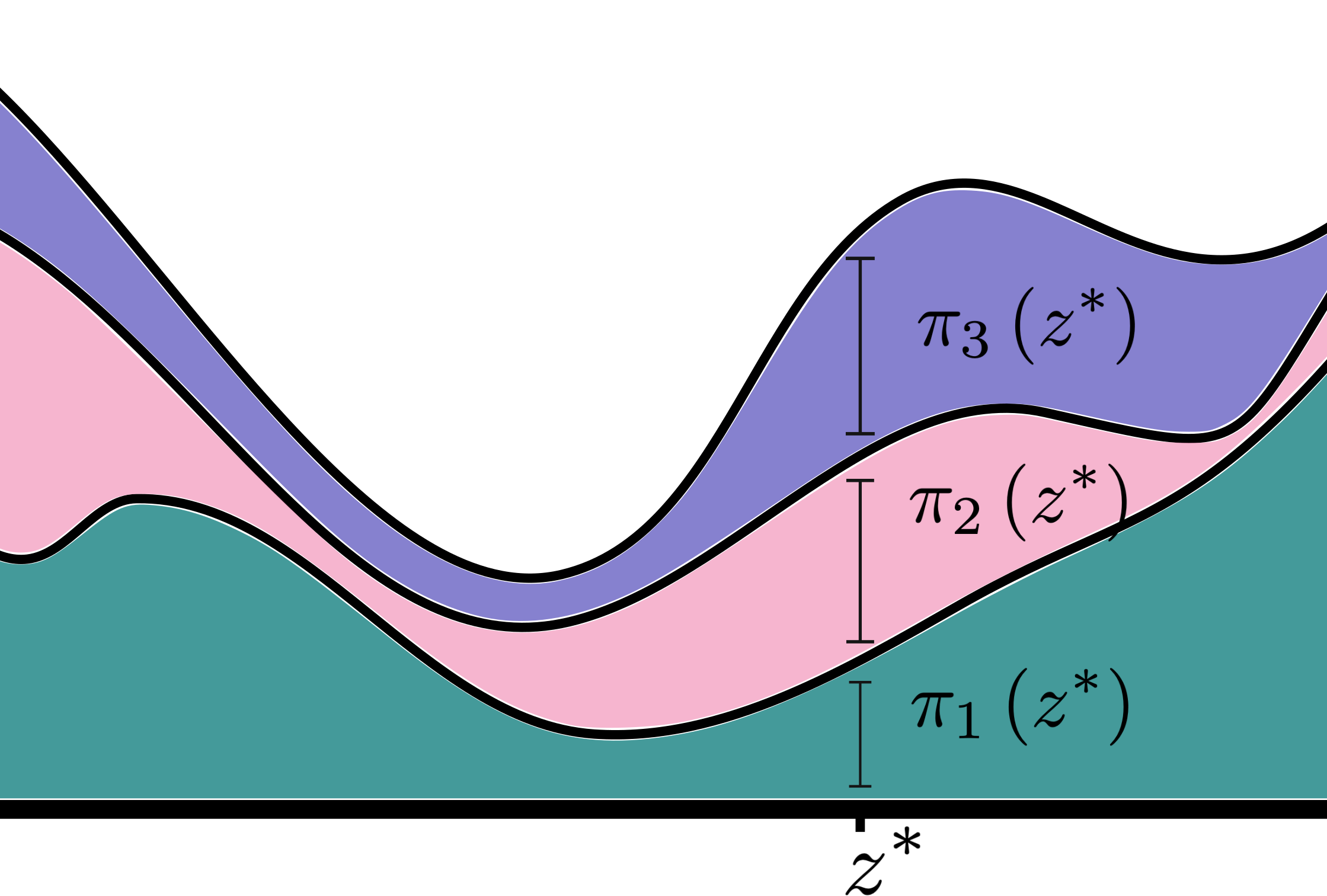}
  \caption{The integral $\int p\left(x, z\right) dz$ can be split into $K$
    terms, where at each $z$ the different terms have a proportion
    $\pi_{j}\left(z\right)$ of the original function
    value. \label{fig:integral_decomposition} }
\end{figure}

\section{Detailed H-IWLB Derivation}
\label{app:hiwlb}
This derivation follows the derivation from the last section, and differs in the last step where we also marginalize out the auxiliary variable $\vz_0$.
\begin{align*}
\gL_K(Q^0) &\leq \log \E_{Q^0} \left[\sum_{j=1}^K\pi_j(\vz_j,\vz_0) \frac{p(\vx,\vz_j)r(\vz_0|\vz_j)}{q_j(\vz_j|\vz_0)q_0(\vz_0)} \right] &\text{(Jensen's Inequality)}\\
&=\log\sum_{j=1}^K \E_{Q^0} \left[ \pi_j(\vz_j,\vz_0) \frac{p(\vx,\vz_j)r(\vz_0|\vz_j)}{q_j(\vz_j|\vz_0)q_0(\vz_0)}\right] &\text{(Linearity of expectation)} \\
&=\log\sum_{j=1}^K \int_{\vz_0,\vz_j} \pi_j(\vz_j,\vz_0) q_j(\vz_j|\vz_0)q_0(\vz_0) \frac{p(\vx,\vz_j)r(\vz_0|\vz_j)}{q_j(\vz_j|\vz_0)q_0(\vz_0)} d\vz_0d\vz_j &\text{(Marginalization of $\vz_{\neg (j\wedge0)}$)} \\
&=\log\int_{\vz_0,\vz} \left(\sum_{j=1}^K\pi_j(\vz,\vz_0)\right) p(\vx,\vz)r(\vz_0|\vz) d\vz_0d\vz &\text{(Identity)} \\
&= \log p(\vx) &\text{(Marginalization of $\pi_j$, $\vz_0$ and $\vz$)} 
\end{align*}

\section{Proofs of Section 4}

\paragraph{Setup.} 
We work in a probability space $(\Omega, \gF, \sP)$. We write $X\in\gL^p$ for $p\geq1$ if $X$ is a random variable and $\E[|X|^p]<\infty$. 
Convergence in probability and in $\gL^p$ are denoted by $\overset{P}{\rightarrow}$ and $\overset{\gL^p}{\rightarrow}$, respectively. 

We start with a classic result. 
Assume $\Var(w_n)\rightarrow0$ as $n\rightarrow\infty$, then by Chebyshev's inequality, $w_n\rightarrow p(\vx)$ in probability, and $\log w_n\rightarrow \log p(\vx)$ in probability by the continuity of $\log$. 
To get $\gL^1$-convergence, i.e. $\E[|\log w_n - \log p(\vx)|]\rightarrow0$, the missing piece that is both necessary and sufficient is the uniform integrability of $\log w_n$, which is a form of boundedness condition (see also Proposition 1 of \citet{maddison2017filtering}). 
It is clear from now that to say something about the expected value we need to bound the random variable in some way.
With the $\gL^1$-convergence, we conclude the expected lower bound (in our case, some form of ELBO) converges to the marginal log-likelihood:
$$|\E[\log w_n]-\log p(\vx)|\leq \E[|\log w_n - \log p(\vx)|]\overset{n\rightarrow\infty}\longrightarrow0$$

\begin{definition}
A family of random variables $\{X_n\}$ is bounded in probability if for any $\epsilon$, there exists $M\geq0$ such that 
$$\sup_{n\geq1}\sP(|X_n|>M)<\epsilon$$
\end{definition}
Note that if $\{X_n\}$ is bounded in $\gL^1$ (i.e. $\sup_{n\geq1}\E[|X_n|]<\infty$), $\{X_n\}$ is also bounded in probability, since by Markov's inequality,
$$\sP(|X_n|>M)<\frac{\E[|X_n|]}{M}\leq\frac{\sup_n\E[|X_n|]}{M}$$
Setting $M=\frac{\sup_n\E[|X_n|]}{\epsilon}$ gives us the uniform bound.

Below, we extend the well known Continuous Mapping Theorem to the difference of random sequences $X_n-Y_n$.
\begin{lemma}
(\textbf{Extended Continuous Mapping Theorem})
Let $\{X_n\}$ and $\{Y_n\}$ be random variables bounded in probability. 
If $f$ is a continuous function, then
$$X_n-Y_n\overset{P}{\rightarrow}0 \Rightarrow f(X_n)-f(Y_n)\overset{P}{\rightarrow}0$$
\end{lemma}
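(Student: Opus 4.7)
The plan is to reduce this to the standard continuous mapping theorem by localizing to a compact set where $f$ is uniformly continuous. The role of boundedness in probability is precisely to guarantee that, with high probability, both $X_n$ and $Y_n$ lie in a fixed compact set, so that continuity of $f$ can be upgraded to uniform continuity via Heine--Cantor.

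Concretely, fix $\delta > 0$ and $\epsilon > 0$; the goal is to show $\sP(|f(X_n) - f(Y_n)| > \delta) < \epsilon$ for all sufficiently large $n$. First I would invoke boundedness in probability to choose $M > 0$ such that $\sup_{n} \sP(|X_n| > M) < \epsilon/3$ and $\sup_{n} \sP(|Y_n| > M) < \epsilon/3$ (taking the larger of the two bounds from the definition). Next, since $f$ is continuous on the compact interval $[-M, M]$, Heine--Cantor yields uniform continuity there: there exists $\eta > 0$ such that for any $x, y \in [-M, M]$ with $|x - y| < \eta$ we have $|f(x) - f(y)| < \delta$. Finally, since $X_n - Y_n \overset{P}{\to} 0$, there is some $N$ such that $\sP(|X_n - Y_n| \geq \eta) < \epsilon/3$ for all $n \geq N$.

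The finish is a union bound. On the intersection of the three good events $\{|X_n| \leq M\}$, $\{|Y_n| \leq M\}$, and $\{|X_n - Y_n| < \eta\}$, uniform continuity forces $|f(X_n) - f(Y_n)| < \delta$. Hence
\begin{align*}
\sP(|f(X_n) - f(Y_n)| \geq \delta) &\leq \sP(|X_n| > M) + \sP(|Y_n| > M) + \sP(|X_n - Y_n| \geq \eta) \\
&< \epsilon/3 + \epsilon/3 + \epsilon/3 = \epsilon,
\end{align*}
for $n \geq N$, which is exactly $f(X_n) - f(Y_n) \overset{P}{\to} 0$.

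I do not expect a real obstacle here; the only conceptual point worth highlighting is why boundedness in probability is essential. Without it, $X_n$ and $Y_n$ could drift to regions where $f$ oscillates arbitrarily fast (e.g.\ $f(x) = x^2$ with $X_n - Y_n \to 0$ but $|X_n|, |Y_n| \to \infty$), so pointwise continuity alone would not transfer closeness of the arguments to closeness of the images. The uniform continuity step is thus the load-bearing ingredient, and boundedness in probability is exactly the hypothesis that licenses it.
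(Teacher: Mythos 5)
Your proof is correct and follows essentially the same route as the paper's: use boundedness in probability to confine $X_n$ and $Y_n$ to a compact interval with high probability, upgrade continuity to uniform continuity there via Heine--Cantor, and conclude with a union bound. The only difference is cosmetic bookkeeping --- you use a direct $\epsilon/3$ split, whereas the paper bounds the failure probability by $r\epsilon$ plus a vanishing term and then sends $r\to 0$.
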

\begin{proof}
Fix $\epsilon>0$. 
Choose $r>0$. 
There exists some positive value $T_{r,\epsilon}$ such that $\sP(|X_n|>T_{r,\epsilon})<\frac{r}{2}\epsilon$ and $\sP(|Y_n|>T_{r,\epsilon})<\frac{r}{2}\epsilon$. 
Within the interval $[-T_{r,\epsilon},T_{r,\epsilon}]$, $g$ is uniformly continuous, so there exists $\delta_{\epsilon, T_{r,\epsilon}}>0$ such that 
$$|x-y|\leq\delta_{\epsilon, T_{r,\epsilon}}\Rightarrow |g(x)-g(y)| \leq \epsilon$$
Now by subadditivity of measure, 
\begin{align*}\sP(|g(X)-g(Y)|>\epsilon)
&\leq \sP(|X_n|>T_{r,\epsilon})+\sP(|Y_n|>T_{r,\epsilon})\\ 
&\qquad+\sP(\{|g(X_n)-g(Y_n)|>\epsilon\}\cap\{|X_n|\leq T_{r,\epsilon}\}\cap\{|Y_n|\leq T_{r,\epsilon}\}) \\
&\leq r\epsilon + \sP(|X_n-Y_n|>\delta_{\epsilon, T_{r,\epsilon}}) 
\end{align*}
The second term goes to $0$ as $n\rightarrow\infty$.
Taking $r$ to $0$ yields the result. 
\end{proof}
Note that the continuity assumption of $f$ can be weakened to assuming the set of discontinuity points of $f$ has measure zero.

Now we restate Proposition~\ref{prop:varprob} below.
\varprob*
\begin{proof}
Let $C_n=\E[\log w_n]$ and $C=\log c$. 
By Chebyshev's inequality, for any $\epsilon>0$, 
$$\lim_{n\rightarrow\infty}\sP(|\log w_n - C_n|>\epsilon)=0$$
which means $\log w_n - C_n\overset{P}{\rightarrow}0$.
Due to $\gL^1$-boundedness, $\log w_n$ is also bounded in probability, and $|C_n|\leq\E[|\log w_n|]\leq\sup_n\E[|\log w_n|]<\infty$.
By the \textit{Extended Continuous Mapping Theorem}, $w_n-\exp(C_n)\overset{P}{\rightarrow}0$. 
Also, $\exp(C_n)=\exp(\E[\log w_n])\leq\exp(|\E[\log w_n]|)$ is bounded, implying $w_n-\exp(C_n)$ is uniformly integrable, so $w_n-\exp(C_n)\overset{\gL^1}{\rightarrow}0$, and
$$\lim_{n\rightarrow\infty}|\E[w_n]-\exp(C_n)|\leq\lim_{n\rightarrow\infty}\E[|w_n-\exp(C_n)|]=0$$
Thus, $\lim_{n\rightarrow\infty}\exp(C_n)=c$ and $\lim_{n\rightarrow\infty} C_n=\log c=C$.

The rest can follow naturally by applying continuous mapping again.
As an alternative, we consider an elementary proof.
Fix $\epsilon>0$ and let $A_n=\{|\log w_n - C|>\epsilon\}$.
Assume $\sP(A_n)\nrightarrow0$.
Then there exists $d'\in(0,1]$ such that $\sP(A_n)\geq d'$ for infinitely many $n$. 
Let $n_k$ be such a subsequence; $\forall\,k\geq1$,
$$\sP(|\log w_{n_k} - C_{n_k}|>\frac{\epsilon}{2}) + \sP(|C_{n_k} - C|>\frac{\epsilon}{2}) \geq \sP(|\log w_{n_k} - C|>\epsilon)\geq d'$$
Since the first term on the LHS converges to zero, 
$\sP(|C_{n_k} - C|>\frac{\epsilon}{2})=1$ asymptotically, as both $C_{n_k}$ and $C$ are constant (event $\{|C_{n_k}-C|>\frac{\epsilon}{2}\}$ is either $\Omega$ or $\emptyset$).
However, $C-C_{n_{k}}\leq \frac{\epsilon}{2}$ for infinitely many $k$. 
Thus, it must be true that $\sP(A_n)\rightarrow0$ as $n\rightarrow\infty$, and $\log w_n\overset{P}{\rightarrow} C$.  
\end{proof}

Proof of Proposition~\ref{prop:qm} is more straightforward. 
We first restate the statement. 
\qm*
\begin{proof}

The variance of $\log w_n$ can be bounded by triangular inequality:
$$\Var(\log w_n)\leq\E[( \log w_n - \log c)^2 ] + (\log c-\E[\log w_n])^2$$
The first terms goes to zero by convergence in $\gL^2$. 
Note that
$|\E[\log w_n] - \log c| \leq \E[|\log w_n - \log c|]\overset{n\rightarrow\infty}\longrightarrow0 $. 
So the second term also converges to zero. 
\end{proof}

\begin{figure*}[!th]
\centering
\includegraphics[width=0.50\textwidth]{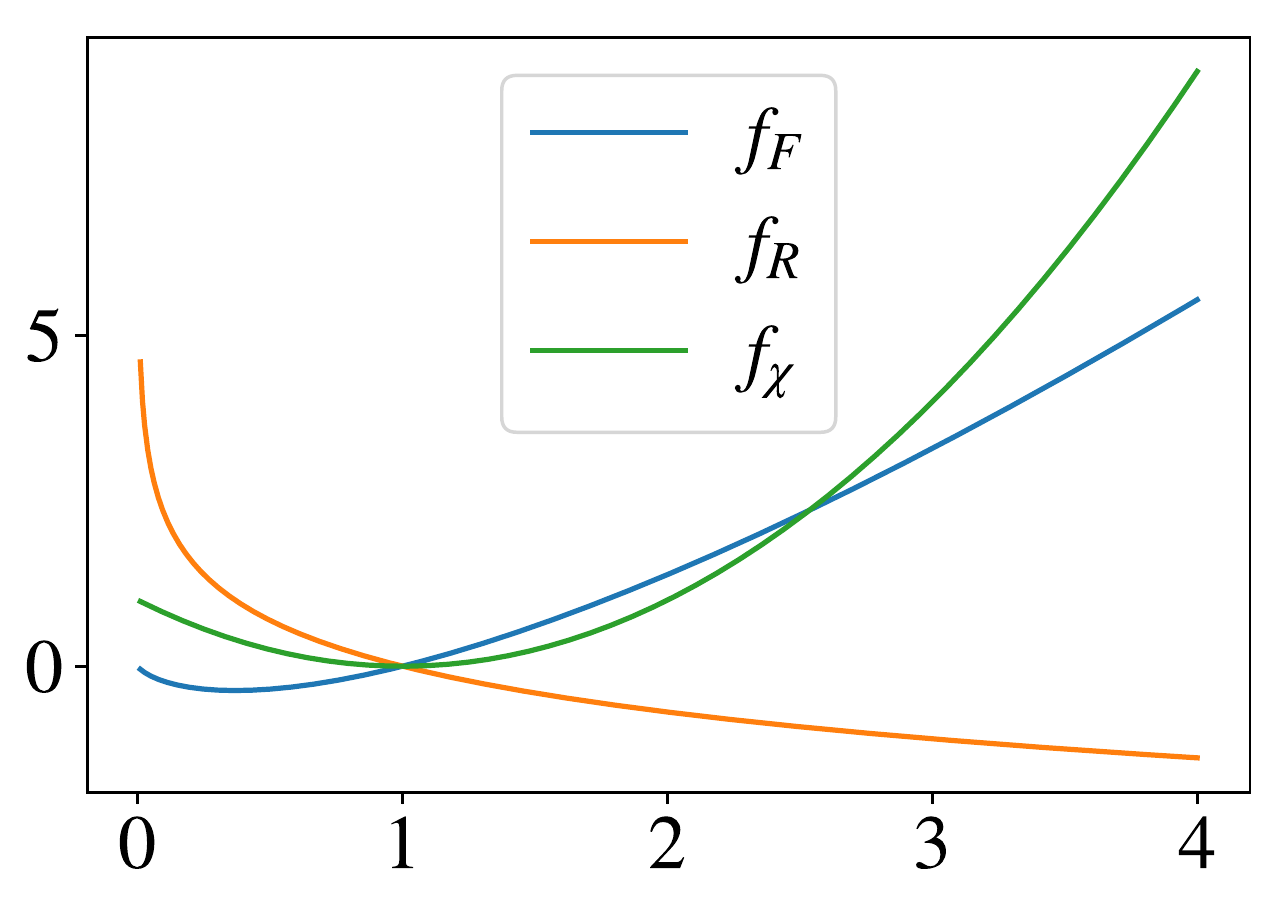}
\caption{The characteristic convex functions for different f-divergences.
$f_F$ and $f_\chi$ are larger when the x-axis value is large, whereas $f_R$ penalizes more when the x-axis value is close to zero. }
\label{fig:fdiv}
\end{figure*}

Finally, we show that $D_{\mathrm{\chi}^2}(p||q)$ is an upper bound on $\KL(p||q)$. 
Generally, let $Q$ and $P$ be probability measures such that $Q\gg P$ ($P$ is absolutely continuous wrt $Q$). 
Since for $x>0$, $\log x \leq x-1$
\begin{align*}
D_{\mathrm{KL}}(P||Q)
&=\int_\Omega \log\left(\frac{d P}{d Q}\right) dP  \\
&\leq \int_\Omega \left(\frac{d P}{d Q}-1\right) dP \\
&= \int_\Omega \left(\frac{d P}{d Q}\right)^2dQ -1 = D_{\chi^2}(P||Q)
\end{align*}

The different characteristic convex functions of the f-divergences are plotted in Figure~\ref{fig:fdiv} for reference.

\section{Additional Experimental Details}
\label{app:addd}
\paragraph{Parameterization of the hierarchical proposals} In all of our experiments, the conditionals $q_j(\vz_j|\vz_0)$ are all normal densities parameterized by a multilayer perceptron (MLP); i.e. $\gN(\vz_j; \mu(\vz_0), \sigma^2(\vz_0))$, where
$$\mu(\vz_0)= (\mW_\mu \vh + \vb_\mu ) + \mW_s \vz_0
\qquad\qquad \sigma(\vz_0)= (\mW_\sigma \vh + \vb_\sigma )^+ 
\qquad\qquad \vh=g(\mW_h\vz_0+\vb_h)$$
where $(\cdot)^+$ denotes the softplus nonlinearity and $g(\cdot)$ is the ELU activation \citep{clevert2015fast}.
We share the hidden units $\vh$ ($\vh\in\sR^{1920}$ for the MNIST and OMNIGLOT experiments) for different $q_j(\vz_j|\vz_0)$'s. 

The conditional $r(\vz_0|\vz_j)$ is also a normal density, with a similar parameterization. 
But when arithmetic averaging ($\alpha=0$) is used, we learn the embedding of each $j$ and conditionally scale and shift the weight norm parameters of the hidden units. 
We also apply the conditional weight norm to condition on the input data $\vx$ in the amortized inference set-up. 

\paragraph{Hyperparameter search of the Caltech101 experiment} We perform grid search on the power set of the following hyperparameters for the Caltech101 experiments:
\begin{enumerate}
    \item learning rate: $[0.0003,0.0001,0.00005,0.0003,0.00001]$
    \item free bits~\citep{kingma2016improved}: $[0.00,0.01]$
    \item polyak averaging: $[0.95,0.99]$
    \item dimensionality of $\vh$: $[500,1024]$ 
\end{enumerate}

\newpage

\section{Additional Experimental Results}
\label{app:adde}
Aside from the power heuristics in Section~\ref{sec:hiwae}, we also explore the possibility to parameterize the weighting function $\pi_j(\vz)$, using an MLP with a softmax output. 
We visualize the learned $\pi_j$ in Figure~\ref{fig:hiwae_toy_learned_pi}.

\begin{figure*}[!th]
\centering
\subfigure{
\includegraphics[width=0.95\textwidth]{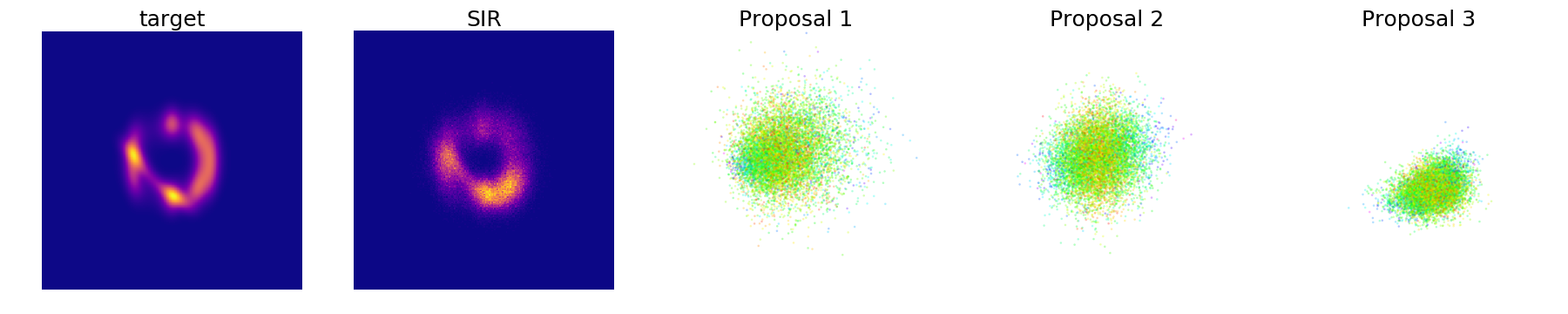}
\label{fig:mulprop_pi_hvi_sub}
}
\subfigure{
\includegraphics[width=0.95\textwidth]{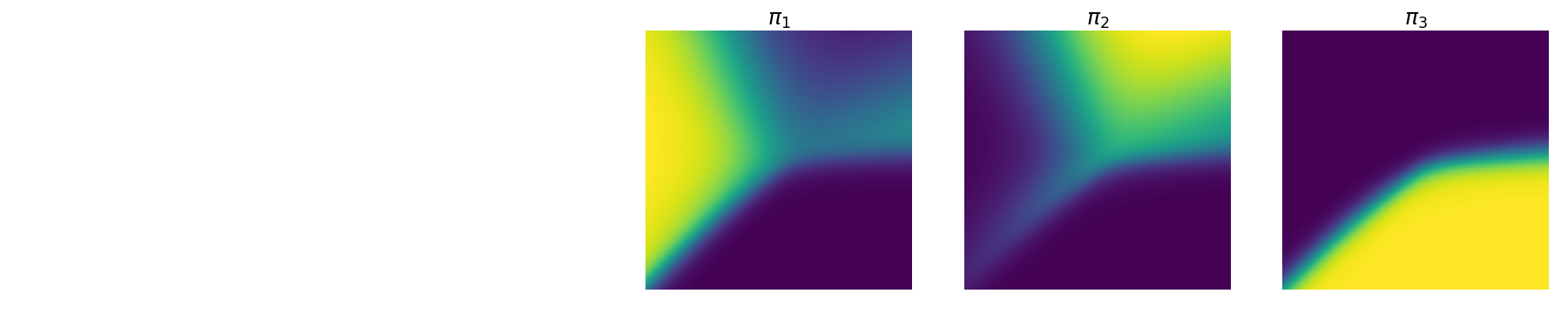}
\label{fig:mulprop_pi_hvi_pi_sub}
}
\subfigure{
\includegraphics[width=0.95\textwidth]{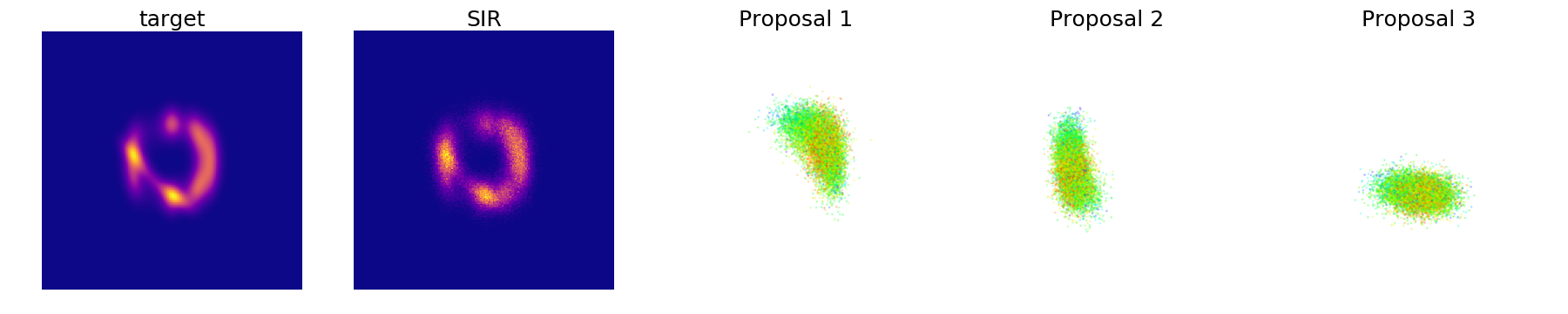}
\label{fig:mulprop_pi_hvi}
}
\subfigure{
\includegraphics[width=0.95\textwidth]{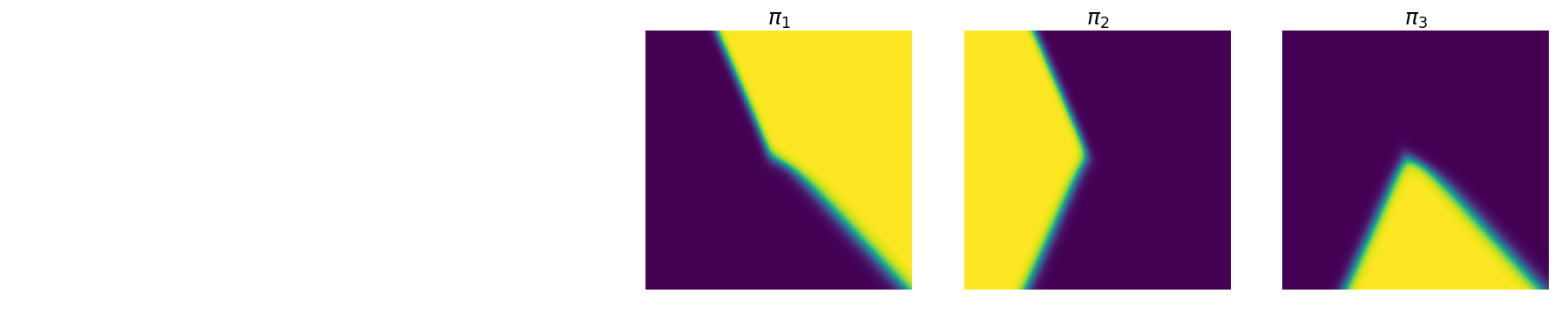}
\label{fig:mulprop_pi_hvi_pi}
}

\caption{Learned hierarchical importance sampling proposals with a learned weighting function $\pi_j(\vz)$ parameterized by a one-hidden-layer MLP. 
The one on top is only trained for 50 iterations with SGD. 
The one below is trained till convergence (5,000 iterations). }
\label{fig:hiwae_toy_learned_pi}
\end{figure*}

\clearpage

\begin{table*}[!h]
\centering
\caption{$K$=1 on MNIST.}
\label{tb:mnist-1}

\begin{tabular}{rcccc}
\toprule
mode               &              IWAE &            H-IWAE &                   H-IWAE &            H-IWAE \\
$\alpha$           &                 - &                 0 &                        1 &                 3 \\
$\tilde{\gL}_{tr}$ &  83.26 \err{0.10} &  82.92 \err{0.17} &  \best{82.63} \err{0.07} &  82.67 \err{0.15} \\
$\tilde{\gL}_{va}$ &  86.57 \err{0.11} &  85.75 \err{0.08} &  \best{85.68} \err{0.06} &  85.72 \err{0.06} \\
$\tilde{\gL}_{te}$ &  86.36 \err{0.15} &  85.50 \err{0.08} &  \best{85.43} \err{0.05} &  85.49 \err{0.07} \\
$\text{NLL}_{va}$  &  82.64 \err{0.11} &  82.24 \err{0.05} &  \best{82.16} \err{0.03} &  82.20 \err{0.04} \\
$\text{NLL}_{te}$  &  82.37 \err{0.12} &  81.96 \err{0.04} &  \best{81.89} \err{0.04} &  81.93 \err{0.03} \\
\bottomrule
\end{tabular}

\end{table*}

\begin{table*}[h]
\centering
\caption{$K$=2 on MNIST.}
\label{tb:mnist-2}

\begin{tabular}{rcccc}
\toprule
mode               &              IWAE &            H-IWAE &                   H-IWAE &            H-IWAE \\
$\alpha$           &                 - &                 0 &                        1 &                 3 \\
$\tilde{\gL}_{tr}$ &  82.36 \err{0.20} &  82.19 \err{0.55} &  \best{82.15} \err{0.60} &  82.43 \err{0.59} \\
$\tilde{\gL}_{va}$ &  85.40 \err{0.05} &  85.05 \err{0.64} &  \best{85.03} \err{0.65} &  85.18 \err{0.55} \\
$\tilde{\gL}_{te}$ &  85.16 \err{0.03} &  84.79 \err{0.63} &  \best{84.76} \err{0.64} &  84.90 \err{0.56} \\
$\text{NLL}_{va}$  &  82.03 \err{0.04} &  81.93 \err{0.42} &  \best{81.88} \err{0.35} &  81.96 \err{0.27} \\
$\text{NLL}_{te}$  &  81.77 \err{0.04} &  81.65 \err{0.42} &  \best{81.60} \err{0.35} &  81.66 \err{0.27} \\
\bottomrule
\end{tabular}

\end{table*}

\begin{table*}[h]
\centering
\caption{$K$=5 on MNIST.}
\label{tb:mnist-5}

\begin{tabular}{rcccc}
\toprule
mode               &              IWAE &            H-IWAE &                   H-IWAE &                   H-IWAE \\
$\alpha$           &                 - &                 0 &                        1 &                        3 \\
$\tilde{\gL}_{tr}$ &  81.48 \err{0.17} &  82.25 \err{0.23} &  \best{81.28} \err{0.14} &         81.32 \err{0.18} \\
$\tilde{\gL}_{va}$ &  84.45 \err{0.06} &  85.05 \err{0.39} &  \best{84.04} \err{0.16} &         84.12 \err{0.16} \\
$\tilde{\gL}_{te}$ &  84.25 \err{0.08} &  84.85 \err{0.41} &  \best{83.79} \err{0.14} &         83.90 \err{0.12} \\
$\text{NLL}_{va}$  &  81.63 \err{0.04} &  82.19 \err{0.27} &         81.42 \err{0.07} &  \best{81.39} \err{0.09} \\
$\text{NLL}_{te}$  &  81.37 \err{0.04} &  81.93 \err{0.28} &         81.16 \err{0.05} &  \best{81.13} \err{0.09} \\
\bottomrule
\end{tabular}

\end{table*}

\begin{table*}[!h]
\centering
\caption{$K$=10 on MNIST.}
\label{tb:mnist-10}

\begin{tabular}{rcccc}
\toprule
mode               &                     IWAE &            H-IWAE &                   H-IWAE &            H-IWAE \\
$\alpha$           &                        - &                 0 &                        1 &                 3 \\
$\tilde{\gL}_{tr}$ &  \best{80.85} \err{0.16} &  82.30 \err{0.93} &         80.89 \err{0.13} &  81.17 \err{0.24} \\
$\tilde{\gL}_{va}$ &         83.84 \err{0.06} &  85.11 \err{0.87} &  \best{83.77} \err{0.23} &  83.95 \err{0.32} \\
$\tilde{\gL}_{te}$ &         83.62 \err{0.03} &  84.86 \err{0.86} &  \best{83.56} \err{0.22} &  83.72 \err{0.34} \\
$\text{NLL}_{va}$  &         81.37 \err{0.05} &  82.42 \err{0.62} &  \best{81.28} \err{0.08} &  81.33 \err{0.13} \\
$\text{NLL}_{te}$  &         81.13 \err{0.02} &  82.17 \err{0.61} &  \best{81.04} \err{0.09} &  81.08 \err{0.15} \\
\bottomrule
\end{tabular}

\end{table*}

\clearpage

\begin{table*}[!tbh]
\centering
\caption{$K$=1 on OMNIGLOT.}
\label{tb:omniglot-1}

\begin{tabular}{rcccc}
\toprule
mode               &               IWAE &             H-IWAE &             H-IWAE &                    H-IWAE \\
$\alpha$           &                  - &                  0 &                  1 &                         3 \\
$\tilde{\gL}_{tr}$ &  106.40 \err{1.75} &  104.57 \err{0.98} &  104.27 \err{0.70} &  \best{103.77} \err{0.49} \\
$\tilde{\gL}_{va}$ &  109.05 \err{1.28} &  106.48 \err{0.48} &  106.26 \err{0.25} &  \best{106.13} \err{0.39} \\
$\tilde{\gL}_{te}$ &  109.90 \err{1.21} &  107.36 \err{0.52} &  107.18 \err{0.13} &  \best{106.97} \err{0.36} \\
$\text{NLL}_{va}$  &  102.98 \err{1.56} &  100.36 \err{0.50} &  100.13 \err{0.39} &   \best{99.90} \err{0.36} \\
$\text{NLL}_{te}$  &  103.28 \err{1.53} &  100.79 \err{0.52} &  100.57 \err{0.24} &  \best{100.50} \err{0.31} \\
\bottomrule
\end{tabular}

\end{table*}

\begin{table*}[h]
\centering
\caption{$K$=2 on OMNIGLOTt.}
\label{tb:omniglot-2}

\begin{tabular}{rcccc}
\toprule
mode               &                      IWAE &             H-IWAE &                    H-IWAE &             H-IWAE \\
$\alpha$           &                         - &                  0 &                         1 &                  3 \\
$\tilde{\gL}_{tr}$ &  \best{102.66} \err{0.59} &  103.24 \err{1.06} &         102.88 \err{0.89} &  103.17 \err{1.22} \\
$\tilde{\gL}_{va}$ &         105.85 \err{0.19} &  105.27 \err{0.53} &  \best{105.18} \err{0.69} &  105.56 \err{0.88} \\
$\tilde{\gL}_{te}$ &         106.70 \err{0.12} &  106.12 \err{0.39} &  \best{105.93} \err{0.73} &  106.34 \err{0.93} \\
$\text{NLL}_{va}$  &         100.14 \err{0.24} &   99.82 \err{0.60} &   \best{99.68} \err{0.40} &   99.87 \err{0.63} \\
$\text{NLL}_{te}$  &         100.48 \err{0.34} &  100.16 \err{0.50} &  \best{100.00} \err{0.49} &  100.18 \err{0.66} \\
\bottomrule
\end{tabular}

\end{table*}

\begin{table*}[h]
\centering
\caption{$K$=5 on OMNIGLOT.}
\label{tb:omniglot-5}

\begin{tabular}{rcccc}
\toprule
mode               &               IWAE &                    H-IWAE &                    H-IWAE &             H-IWAE \\
$\alpha$           &                  - &                         0 &                         1 &                  3 \\
$\tilde{\gL}_{tr}$ &  102.01 \err{1.05} &  \best{101.35} \err{1.48} &         101.37 \err{1.72} &  102.78 \err{0.87} \\
$\tilde{\gL}_{va}$ &  104.48 \err{0.58} &         103.76 \err{0.73} &  \best{103.72} \err{0.94} &  104.70 \err{0.46} \\
$\tilde{\gL}_{te}$ &  105.37 \err{0.52} &         104.68 \err{0.67} &  \best{104.62} \err{0.80} &  105.56 \err{0.49} \\
$\text{NLL}_{va}$  &   99.43 \err{0.64} &   \best{98.75} \err{0.79} &          98.85 \err{0.92} &   99.38 \err{0.36} \\
$\text{NLL}_{te}$  &   99.90 \err{0.67} &          99.23 \err{0.76} &   \best{99.21} \err{0.95} &   99.80 \err{0.35} \\
\bottomrule
\end{tabular}

\end{table*}
\begin{table*}[!h]
\centering
\caption{$K$=10 on OMNIGLOT.}
\label{tb:omniglot-10}

\begin{tabular}{rcccc}
\toprule
mode               &                      IWAE &             H-IWAE &             H-IWAE &             H-IWAE \\
$\alpha$           &                         - &                  0 &                  1 &                  3 \\
$\tilde{\gL}_{tr}$ &   \best{99.71} \err{0.95} &   99.75 \err{1.31} &  100.81 \err{1.24} &  101.29 \err{0.51} \\
$\tilde{\gL}_{va}$ &  \best{102.67} \err{0.39} &  103.21 \err{0.77} &  103.49 \err{0.63} &  103.66 \err{0.18} \\
$\tilde{\gL}_{te}$ &  \best{103.53} \err{0.23} &  104.11 \err{0.85} &  104.29 \err{0.65} &  104.45 \err{0.17} \\
$\text{NLL}_{va}$  &   \best{97.97} \err{0.47} &   98.48 \err{0.67} &   98.58 \err{0.50} &   98.78 \err{0.30} \\
$\text{NLL}_{te}$  &   \best{98.48} \err{0.34} &   98.80 \err{0.64} &   99.04 \err{0.55} &   99.22 \err{0.25} \\
\bottomrule
\end{tabular}

\end{table*}

\end{document}